\DeclarePairedDelimiter\ceil{\lceil}{\rceil}
\def\imagetop#1{\vtop{\null\hbox{#1}}}
\DeclareMathAlphabet{\mathpzc}{OT1}{pzc}{m}{it}
\newlength{\squeezefactor}
\newlength{\hardsqueezefactor}
\newcommand{\mysection}[1]{\vspace{-0.1\squeezefactor}\section{#1}\vspace{-0.1\squeezefactor}}
\newcommand{\mysubsection}[1]{\vspace{-0.15\squeezefactor}\subsection{#1}\vspace{-0.1\squeezefactor}}
\newcommand{\mycaption}[1]{\vspace{-0.2\squeezefactor}\caption{#1}\vspace{-0.4\squeezefactor}}
\newcommand{\wipe}[1]{}
\definecolor{MyDarkRed}{rgb}{0.5,0,0.1}
\definecolor{MyRed}{rgb}{0.9,0,0.1}
\definecolor{MyDarkGreen}{rgb}{0,0.5,0.1}
\definecolor{blue}{rgb}{0,0,1.0}
\newtheorem{theorem*}{\bf Theorem}
\newtheorem{proposition}{\bf Proposition}
\newtheorem{definition}{\bf Definition}
\newcommand{\argmin}[0]{\mbox{argmin\/}}
\newcommand{\craig}[1]{}
\newcounter{algorithm}
\newenvironment{algorithm}[1]
{ 
  \vspace{1em}
  \refstepcounter{algorithm}
  \begin{tabular}{r|l} 
  \multicolumn{2}{c}{\begin{minipage}[c]{0.9\columnwidth} \textbf{Algorithm~\arabic{algorithm}: {#1}} \end{minipage}} \\
  \hline
}
{ 
  \end{tabular}
}
\newcounter{algorithmline}[algorithm]
\newcommand{\newalgline}{\refstepcounter{algorithmline}\thealgorithmline} 
\newcommand{\alglinecontent}[1]{\begin{minipage}{0.9\columnwidth}\vspace{0.05in} \hangindent=0.1in {#1} \end{minipage}}
\begin{document}

\title{Approximate Structure Construction Using \newline Large Statistical Swarms}


\author{
\authorblockN{Subhrajit Bhattacharya}
\authorblockA{Department of Mechanical Engineering and Mechanics \\ Lehigh University\\ Pennsylvania, U.S.A.\\
Email: sub216@lehigh.edu}
}

\maketitle

\begin{abstract}
In this paper we describe a novel local algorithm for large statistical swarms using \emph{harmonic attractor dynamics}, by means of which a swarm can construct harmonics of the environment. This in turn allows the swarm to approximately reconstruct desired structures in the environment.
The robots navigate in a discrete environment, completely free of localization, being able to communicate with other robots in its own discrete cell only, and being able to sense or take reliable action within a disk of radius $r$ around itself.
We present the mathematics that underlie such dynamics and present initial results demonstrating the proposed algorithm.
\end{abstract}



\mysection{Introduction}

Statistical methods in swarm robotics has been studied extensively~\cite{Correll2015}. However, most of the past research has focused on a diffusion-based model where swarms diffuse in an environment while attaining coverage~\cite{Elamvazhuthi_Berman15} or making estimations~\cite{Ramachandran_Elamvazhuthi_Berman,Lu:Harmonic:estimation}.
Particle dynamics (microscopic model) in such problems constitute of a Gaussian \emph{kernel} around a robot's current location, according to which the robot either re-samples its new location or updates its \emph{weight}.

This paper is a first attempt towards developing local behaviors of robots (kernels) that would allow the construction of structured information at a global scale contained in the robot swarm, and hence the environment. 
The key innovation in doing this is to develop new dynamics (\emph{harmonic attractor dynamics} represented by novel kernels) that can manifest itself in weights carried by the robots in the swarm, and hence be able to construct stable patterns in spatial distribution of the weights (the \emph{harmonics} of the environment).
While weighted Monte-Carlo methods have been extensively used in the past~\cite{Fishman-Monte-Carlo,seq:montecarlo,probRob:Thrun}, they have mostly been used for estimation and sensing.

The key assumptions behind this work are:
\begin{enumerate}
 \item \emph{No localization:} There is no global localization and the robots in the swarm do not have odometer that would allow them to reliably infer their locations in the global space. If a robot takes an action to move to a new location, the action can be reliably executed only if the new location is within a radius of $r_a$ of its current location. 
 \item \emph{Local behavior:} The robots can communicate with other robots in a small neighborhood (within a radius $r_c$) and
 can sense presence of boundaries/obstacles only locally (within a limited radius of $r_b$ around itself).
 \item \emph{No central coordination:} There is no global or centralized coordination that can decide and assign tasks to the robots online. The robots can only have fixed, local, pre-determined behaviors described to them at the beginning.
\end{enumerate}
\footnote{For the purpose of this paper we will choose $r_a = r_b = r$ and $r_c = 1$.}
In this paper we consider a discretized representation of the environment and construct a Markov chain model on it, instead of the usual continuous model of environments traditionally used for statistical swarms. Kernels in such a discrete model constitute of local actions that are taken relative to a robot's current location, and do not, in general, depend on robot's position in the environment (unless the robot senses an obstacle/boundary within a radius of $r_b$).
While statistical swarms have been used in discrete or Markov chain models of environments~\cite{Açikmeşe:swarm,2014arXiv1403.4134B}, past research in this area has relied on an underlying assumption of global localization of the robots, which we do not require in the present paper. 

\mysection{Background}

\mysubsection{Markov Chain}

In this paper we will consider a discrete representation of an environment. For illustration we will use a line segment environment discretized into $n$ ``\emph{cells}'' (Figure~\subref*{fig:1-d-cells}). However the theory generalizes naturally to planar environments as well.

\begin{figure*}
\centering
\begin{tabular}{cc}
  
  \begin{tabular}{c}
  
    \subfloat[Markov chain model for a robot navigating along a line segment of length $5$, discretized into five segments of unit length, each corresponding to a state.]{\fbox{\includegraphics[width=0.4\textwidth, trim=0 0 0 0, clip=true]{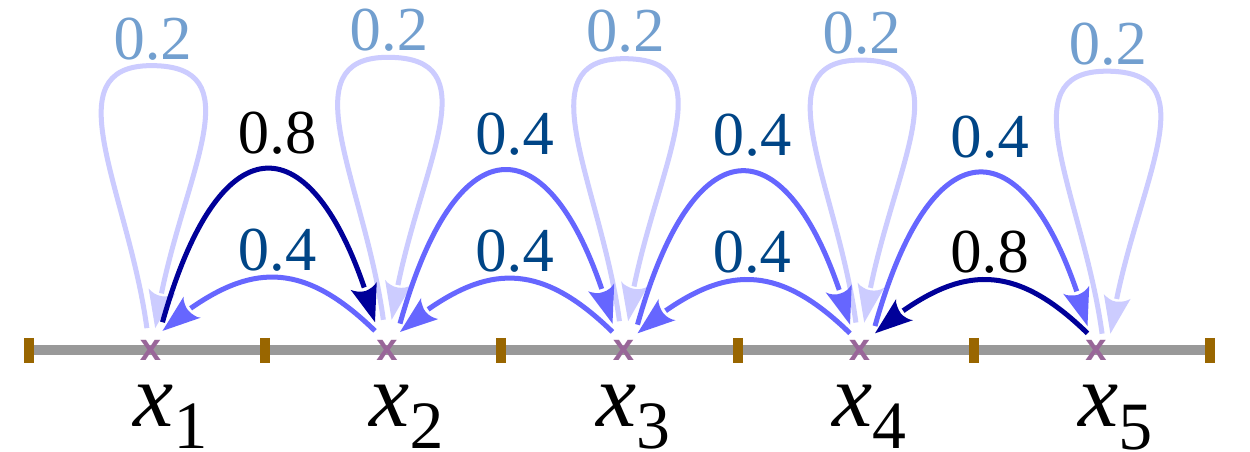}} \label{fig:1-d-cells}} \\
    
    \subfloat[Corresponding \emph{transition probability matrix}.]{\quad $P = \begin{bmatrix} 
									  0.2 & 0.4 & 0 & 0 & 0 \\ 
									  0.8 & 0.2 & 0.4 & 0 & 0 \\ 
									  0 & 0.4 & 0.2 & 0.4 & 0 \\ 
									  0 & 0 & 0.4 & 0.2 & 0.8 \\ 
									  0 & 0 & 0 & 0.4 & 0.2
									\end{bmatrix}$ \quad  \label{fig:transition-prob-example}} \\
									
	\subfloat[Eigenvalues and eigenvectors of $P$. Notice that the eigenvalue with the highest magnitude is $\lambda_1 = 1$, and the corresponding eigenvector is the only one that has all positive entries.]{ \small $\begin{array}{ll}
			\lambda_1 = 1.00, & \pi_1 = [0.12, 0.25, 0.25, 0.25, 0.13]^T \\ 
			\lambda_2 = 0.77, & \pi_2 = [-0.19, -0.27, 0.00, 0.27, 0.19]^T \\ 
			\lambda_3 = -0.60, & \pi_3 = [-0.12, 0.25, -0.25, 0.25, -0.13]^T \\ 
			\lambda_4 = -0.37, & \pi_4 = [-0.19, 0.27, 0.00, -0.27, 0.19]^T \\ 
			\lambda_5 = 0.20, & \pi_5 = [0.19, -0.00, -0.38, 0.00, 0.19]^T \\
		      \end{array}$ }
  \end{tabular}
  
  \begin{tabular}{c}
   \subfloat[Eigenvector $\pi_1$.]{\includegraphics[width=0.25\textwidth,height=0.17\textwidth, trim=60 200 60 200, clip=true]{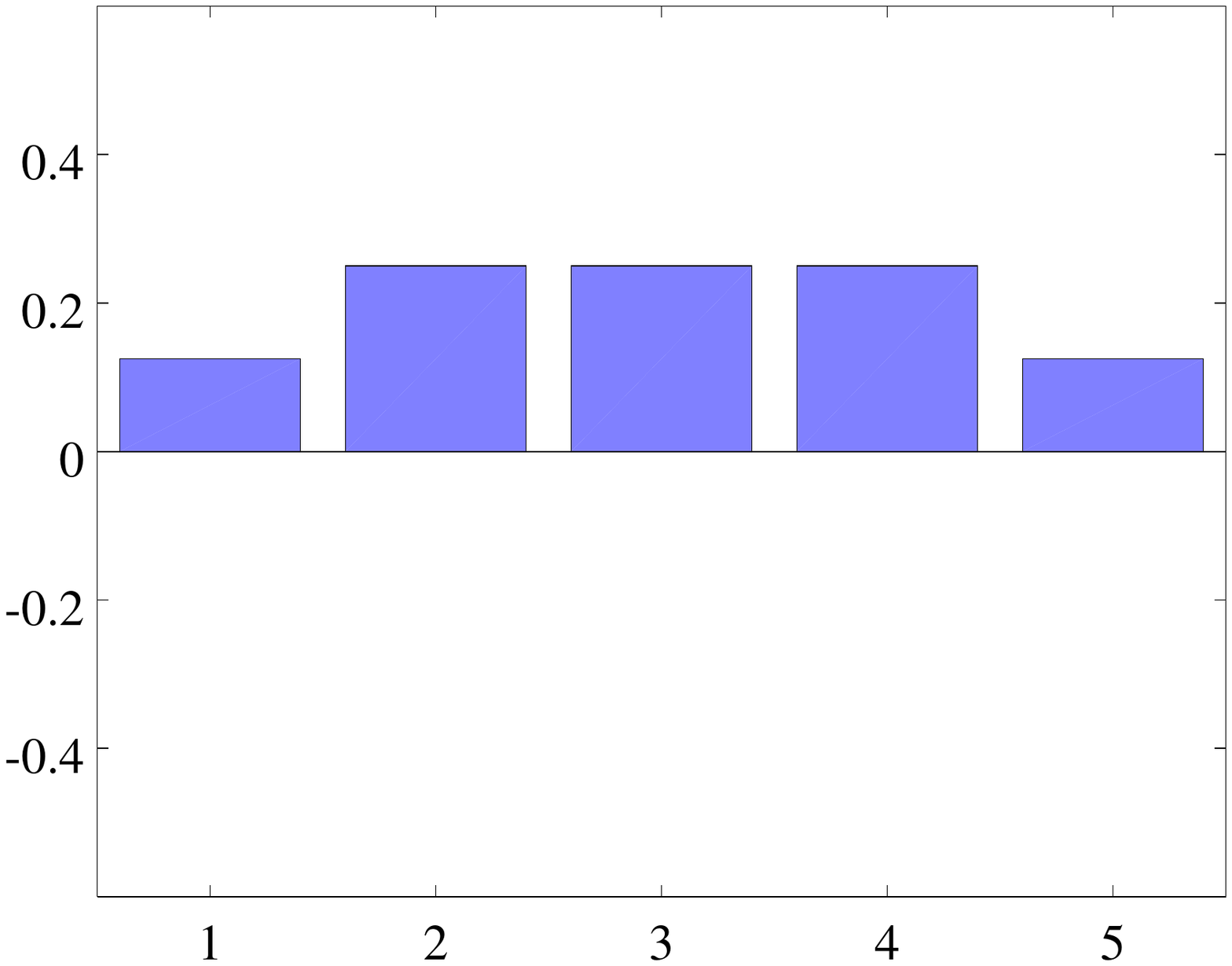} \label{fig:harmonic-1}}  \vspace{-0.1in} \\
   \subfloat[Eigenvector $\pi_2$.]{\includegraphics[width=0.25\textwidth,height=0.17\textwidth, trim=60 200 60 200, clip=true]{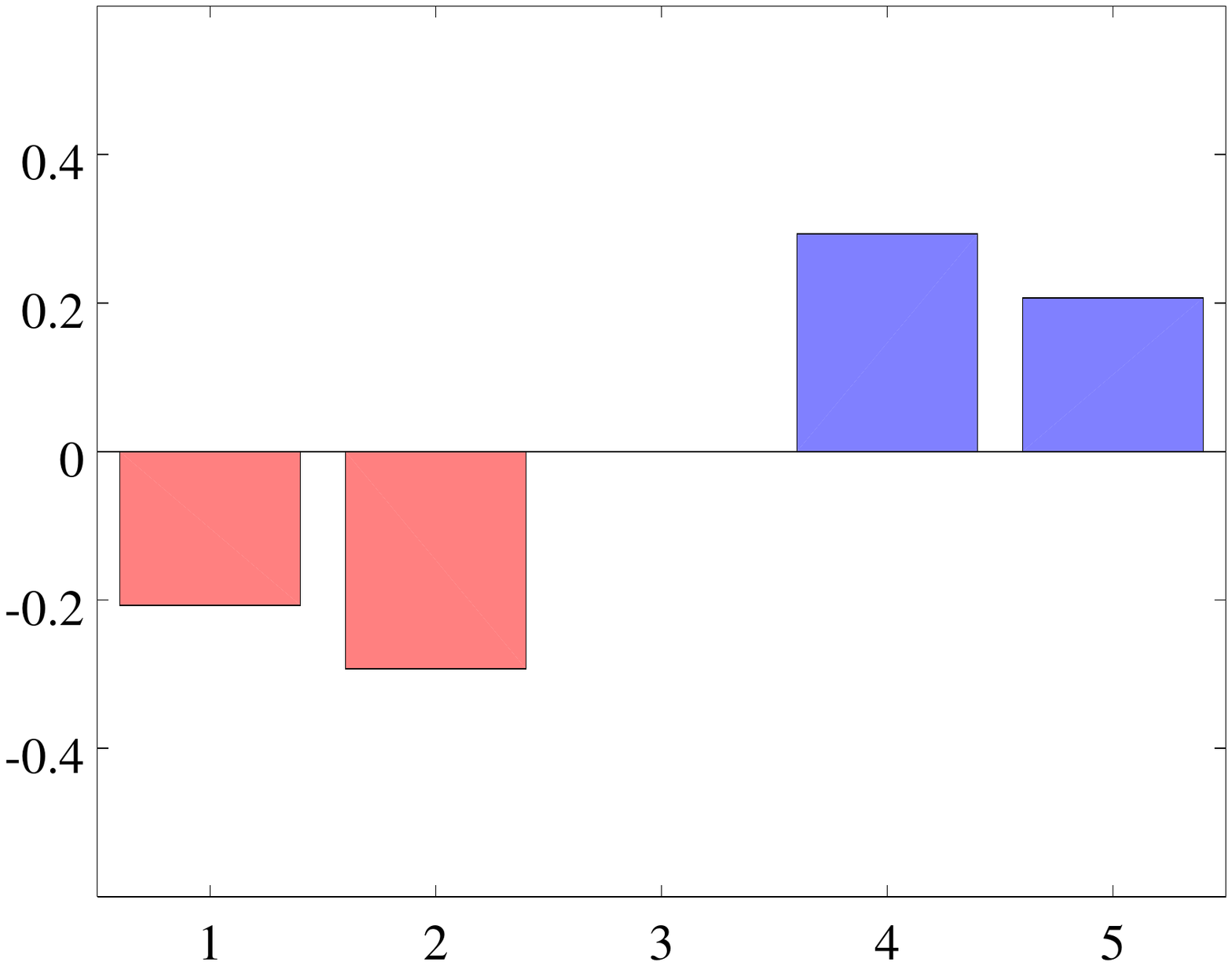} \label{fig:harmonic-2}}  \hspace{-0.1in}
   \subfloat[Eigenvector $\pi_3$.]{\includegraphics[width=0.25\textwidth,height=0.17\textwidth, trim=60 200 60 200, clip=true]{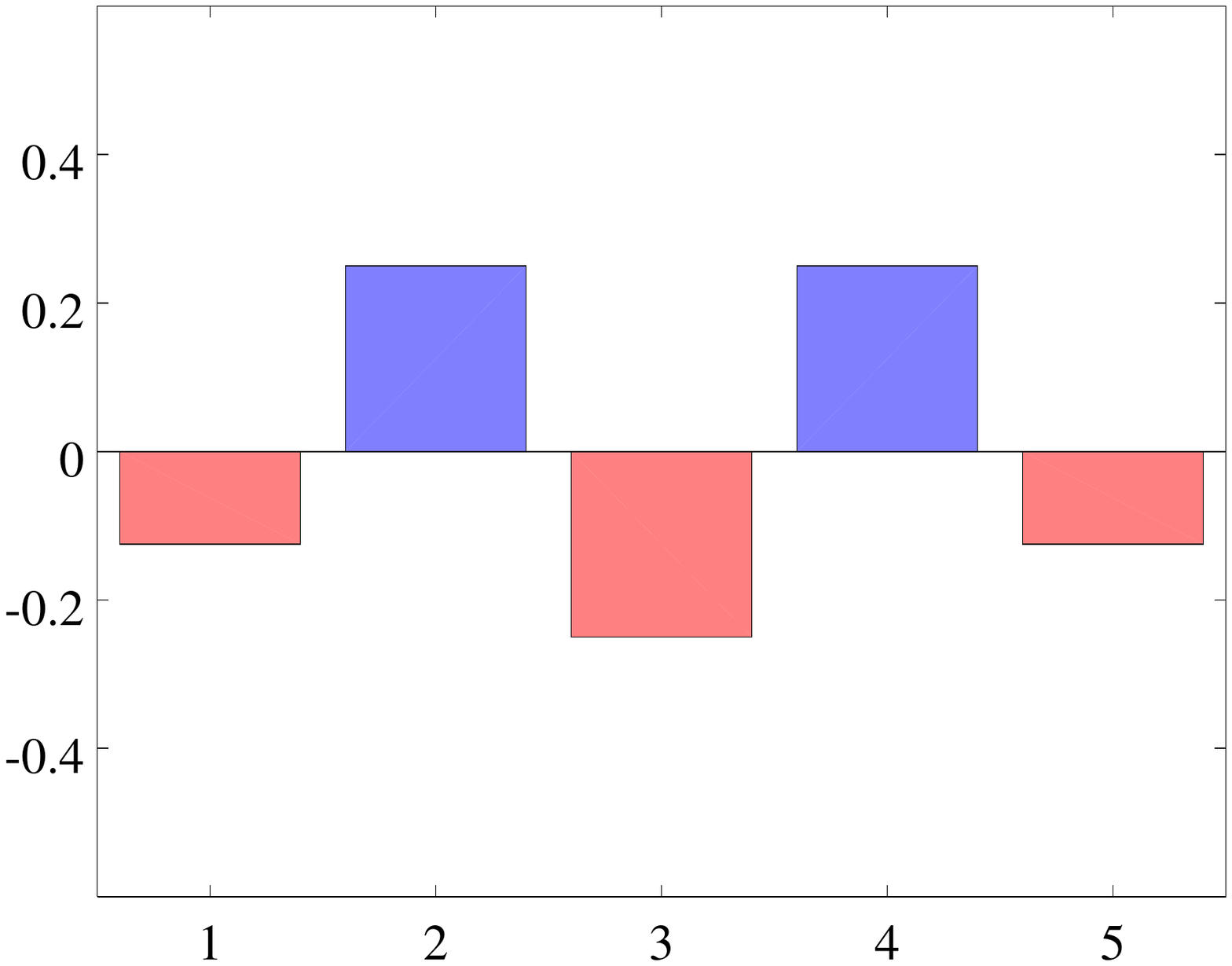} \label{fig:harmonic-3}}  \vspace{-0.1in} \\
   \subfloat[Eigenvector $\pi_4$.]{\includegraphics[width=0.25\textwidth,height=0.17\textwidth, trim=60 200 60 200, clip=true]{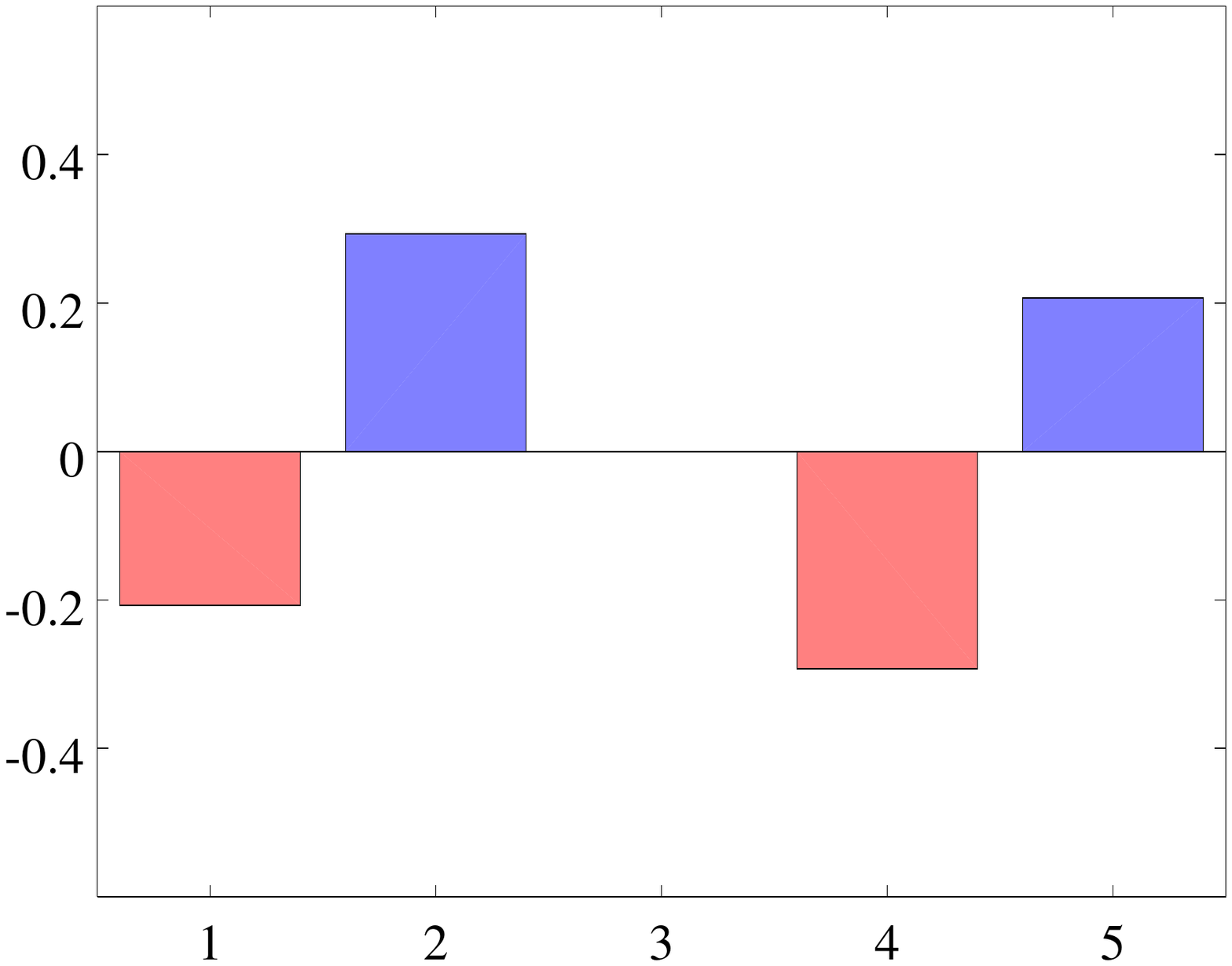} \label{fig:harmonic-4}}  \hspace{-0.1in}
   \subfloat[Eigenvector $\pi_5$.]{\includegraphics[width=0.25\textwidth,height=0.17\textwidth, trim=60 200 60 200, clip=true]{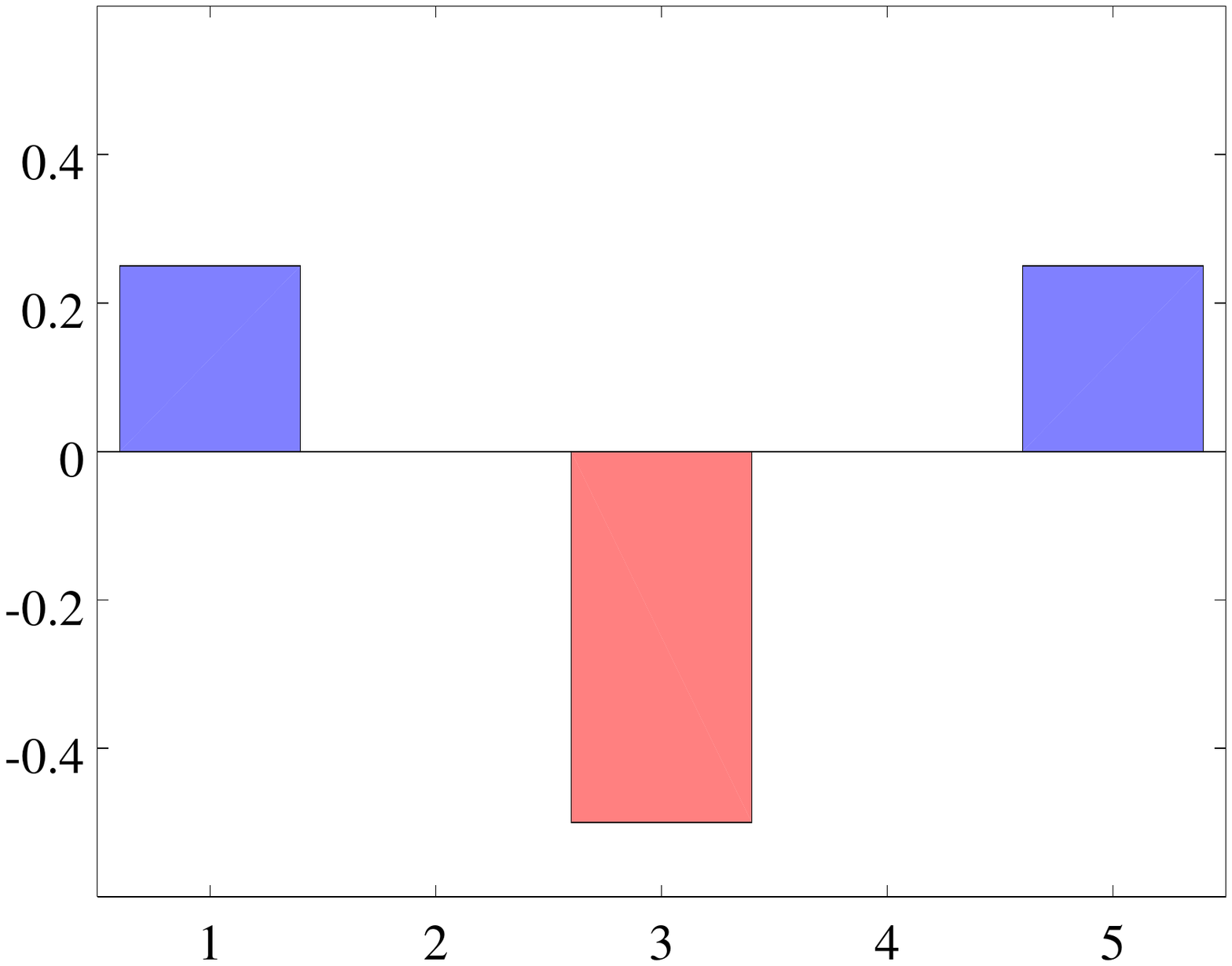} \label{fig:harmonic-5}}
  \end{tabular}
  
\end{tabular} 
\mycaption{An illustrative example of a Markov Chain, the transition probability matrix, and its eigen-spaces. (d)-(h): The eigenvectors ($L^1$-normalized) of $P$ constitute \emph{harmonics} of the space with respect to the Markov chain. (The positive bars are colored blue, while the negative bars are colored red, just for visualization.)} \label{fig:1d-example-eigenspaces} 
\end{figure*}

We construct a Markov Chain~\cite{meyn2009markov} where every state corresponds to a discrete cell in the environment.
The set of states (cells) is denoted $S = \{x_1, x_2, \cdots, x_n\}$. For a robot residing on a particular cell, it can take an action to move to a neighboring cell, with associated transition probabilities.
For example, one can consider the case of a robot navigating on a line segment (Figure~\ref{fig:1-d-cells}), where, if the robot is at $x_i (i\neq 1,n)$, in the next time step it moves to $x_{i-1}$ or $x_{i+1}$ with equal probability of $0.4$, 
while it remains in $x_i$ with probability of $0.2$.
The states that have only one adjacent state (the boundary states $x_1$ and $x_n$), the transition to the adjacent state happens with a probability of $0.8$, where the robot stays at its previous state with probability $0.2$.
This dynamics can be described using a sparse \emph{transition probability matrix}, $P$ (Figure~\ref{fig:transition-prob-example}), in which the element $P_{ij}$ gives the probability that a robot at $x_j$ transitions to $x_i$ at the next time step. As a consequence, each column of $P$ needs to add up to $1$ (such matrices are referred to as \emph{stochastic matrices}).

A probability distribution over the states is represented as a column vector, $\rho = [\rho_1, \rho_2, \cdots \rho_n]^T$, such that the probability of finding a robot in state $x_i$ is $\rho_i$ (sometimes we will use the equivalent notation $\rho(x_i) = \rho_i$ for convenience).
The probability distribution itself is a dynamical variable that evolves with time (placed in the superscript of $\rho$) according to the dynamics,
\vspace{-0.2\squeezefactor}
\begin{equation} \label{eq:prob-dynamics}
 \rho^{(t\!+\!1)} ~=~ P\,\rho^{(t)}, \qquad t=0,1,2,\cdots
 \vspace{-0.2\squeezefactor}
\end{equation}
\noindent For example, in the example of Figure~\ref{fig:1-d-cells}, if the robot starts at $x_1$ at $t=0$ (\emph{i.e.}, $\rho^{(0)} = [1, 0, 0, \cdots, 0]^T$ -- Figure~\ref{fig:p-dynamics-0}), in the next time step it can be either at $x_1$ or $x_2$ with probability $0.2$ and $0.8$ respectively (Figure~\ref{fig:p-dynamics-1}).
This is validated by the fact that $\rho^{(1)} = P \rho^{(0)} = [0.2, 0.8, 0, 0, \cdots, 0]^T$.

\mysubsection{Steady-state Distribution}

It is easy to observe that the Markov chains as described are \emph{irreducible} (every state can be reached from another state vis some path) and \emph{aperiodic} (for any given positive integer, $\tau$, the robot has a non-zero probability of returning back to its starting state exactly after time $\tau$).
With these assumptions, the convergence of the dynamics \eqref{eq:prob-dynamics} for any initially chosen $\rho^{(0)}$ is guaranteed by the property of a Stochastic matrix that its eigenvalue of maximum magnitude is $1$ and the corresponding eigenspace is one-dimensional (a consequence of the Perron-Frobenius theorem).
This guarantees that the component of $\rho^{(0)}$ in the direction of the eigenvector corresponding to the unit eigenvalue survives, while the components in the direction of any of the other eigenvector (corresponding to eigenvalues with magnitude less than $1$) go to zero due to the dynamics \eqref{eq:prob-dynamics}.

Let's denote the eigenvector of $P$ corresponding to the eigenvalue of $\lambda_1 \!=\! 1$ by $\pi_1$.
The other eigenvalues (with magnitude less than $1$) and corresponding eigenvectors will be denoted by $\lambda_2, \lambda_3, \cdots, \lambda_n$ and $\pi_2, \pi_3, \cdots, \pi_n$ respectively.
\footnote{A notational disambiguation: Since $\pi_i$ is itself a vector, we will denote the element of $\pi_i$ corresponding to the location $x_j$ as $\pi_{i,j}$.}
The above discussion implies the following limit result 
\vspace{-0.2\squeezefactor}
\begin{equation} \label{eq:steady-state}
 \lim_{t \rightarrow \infty} \, P^t \, \rho^{(0)} ~=~ \pi_1
 \vspace{-0.2\squeezefactor}
\end{equation}
for any initial probability distribution $\rho^{(0)}$. Note that 
\vspace{-0.2\squeezefactor}
\begin{equation}
 \pi_1 = P \pi_1
 \vspace{-0.2\squeezefactor}
\end{equation}
is the eigenvector of $P$ corresponding to eigenvalue $1$.
This distribution, $\pi_1$, is known as the \emph{steady-state} or \emph{stationary} distribution of the Markov chain.

We introduce the following two terminologies for future reference:
\begin{definition}[Harmonics of the Markov Chain]
 The eigenvectors, $\pi_1,\pi_2,\cdots,\pi_n$, of $P$ will be referred to as \emph{harmonics} of the space with respect to the Markov chain.
 Clearly they form a basis for $\mathbb{R}^n$.
 However, only $\pi_1$ has a \emph{physical interpretation} of being a probability distribution (the steady-state distribution), since all other eigenvectors have negative entries and do not add up to unity.
 
 [P.S.: This terminology is motivated by the natural resemblance of the eigenvectors to harmonics or the shape of standing waves (Figure~\ref{fig:harmonic-1}-\ref{fig:harmonic-5}).]
\end{definition}
\begin{definition}[Attractor of a Dynamics]
 Since the dynamics of equation~\ref{eq:prob-dynamics} always converges to the distribution $\pi_1$ (or a scalar multiple thereof), we say that the dynamics of $P$ is attracting with the unique attractor $\pi_1$.
\end{definition}

\begin{figure}
    \begin{tabular}{c}
    \subfloat[Probability distribution at $t=0$.]{\includegraphics[width=0.24\textwidth,height=0.17\textwidth, trim=60 200 60 200, clip=true]{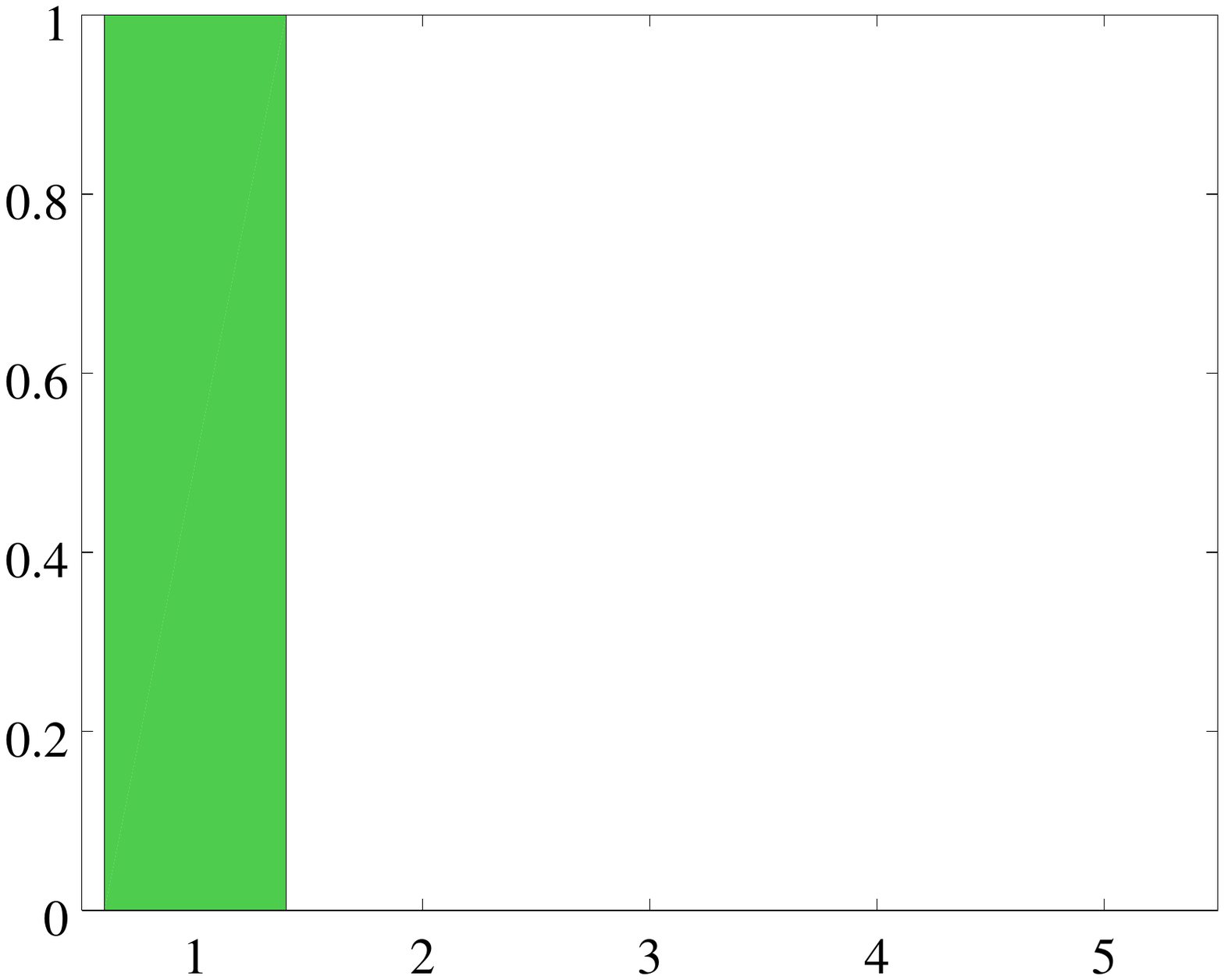} \label{fig:p-dynamics-0}} \hspace{-0.12in}
    \subfloat[$\rho^{(1)}$.]{\includegraphics[width=0.24\textwidth,height=0.17\textwidth, trim=60 200 60 200, clip=true]{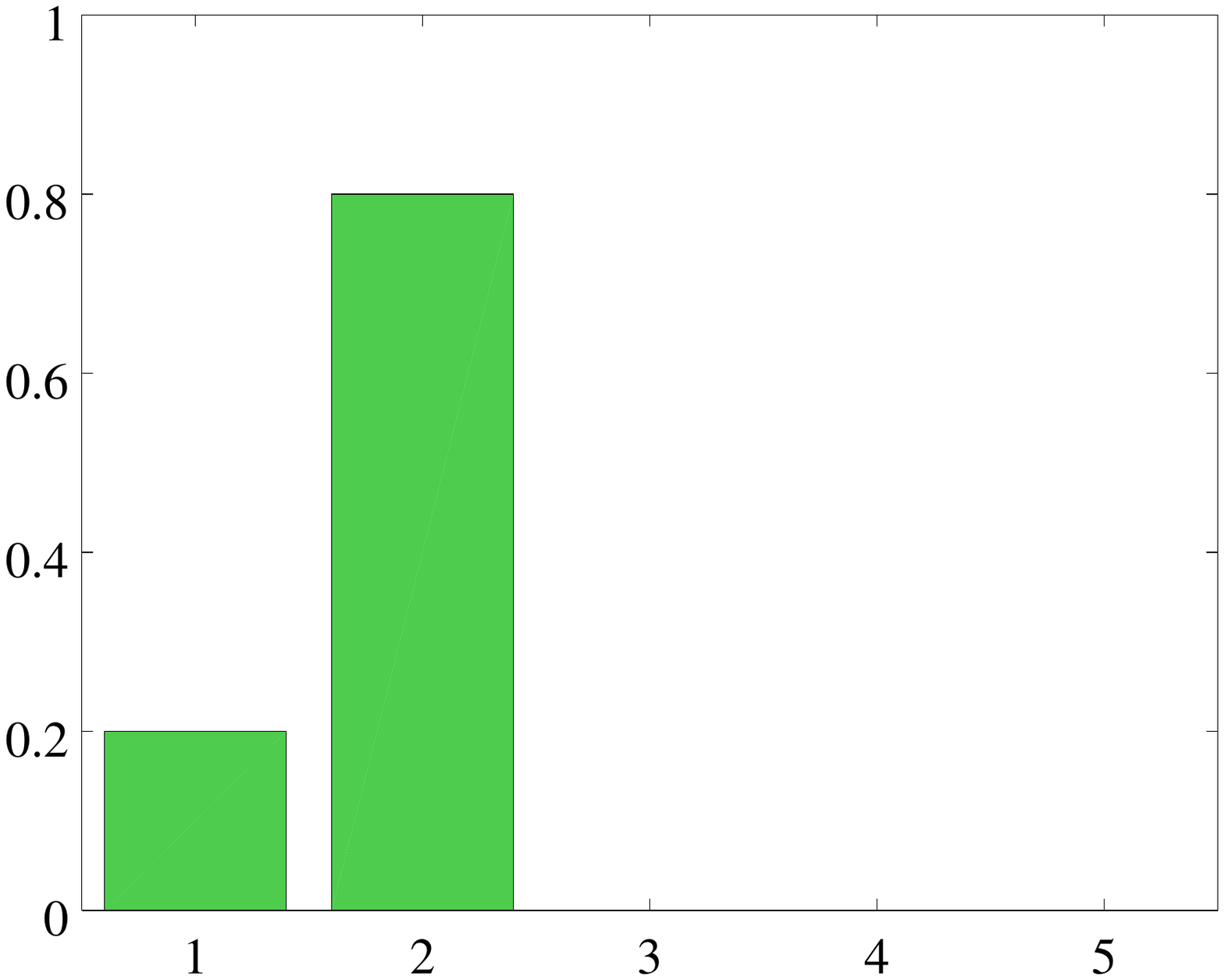} \label{fig:p-dynamics-1}} \vspace{-0.1in} \\
    \subfloat[$\rho^{(3)}$.]{\includegraphics[width=0.24\textwidth,height=0.17\textwidth, trim=60 200 60 200, clip=true]{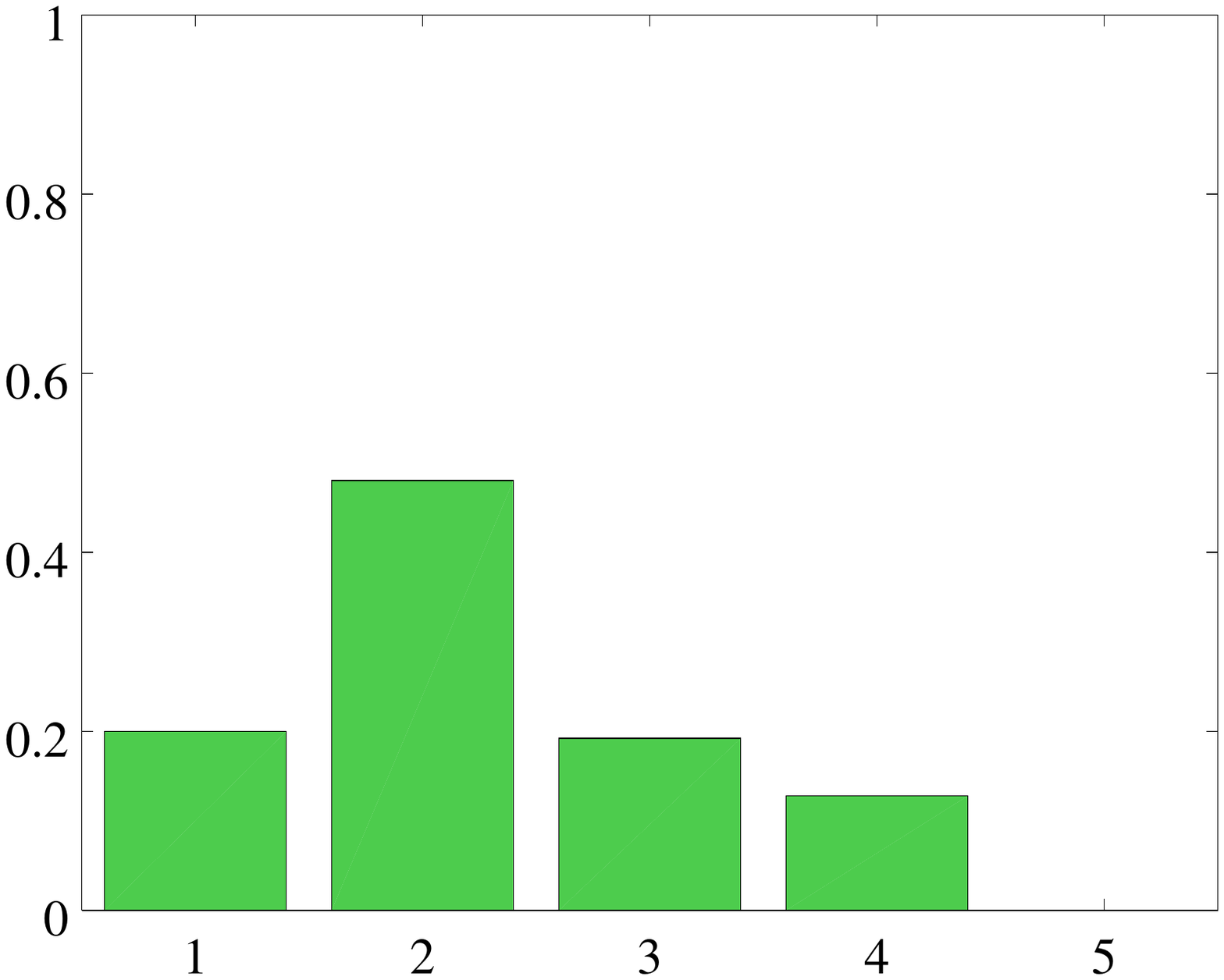} \label{fig:p-dynamics-3}} \hspace{-0.12in}
    \subfloat[$\rho^{(5)}$.]{\includegraphics[width=0.24\textwidth,height=0.17\textwidth, trim=60 200 60 200, clip=true]{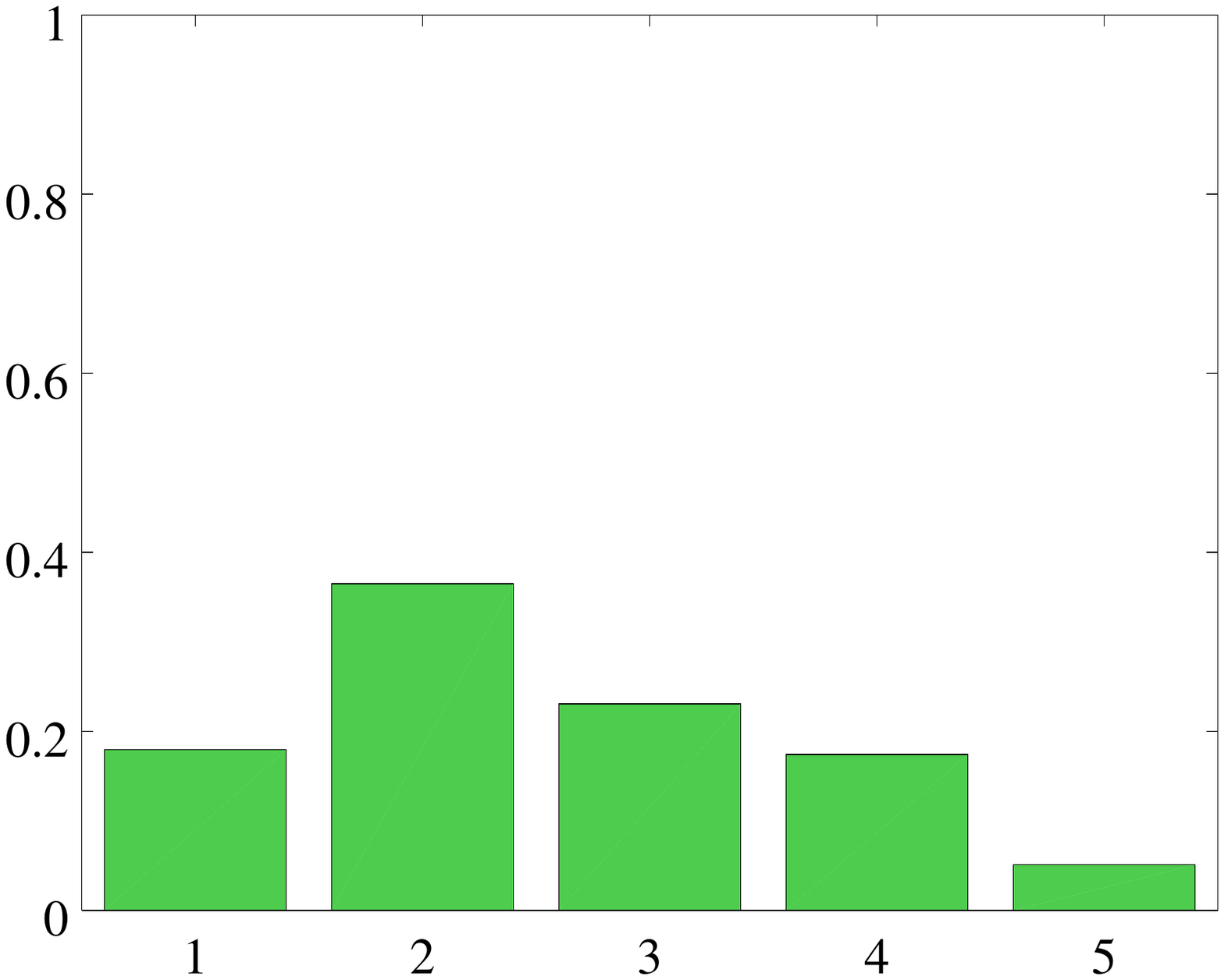} \label{fig:p-dynamics-5}} \vspace{-0.1in} \\
    \subfloat[$\rho^{(10)}$.]{\includegraphics[width=0.24\textwidth,height=0.17\textwidth, trim=60 200 60 200, clip=true]{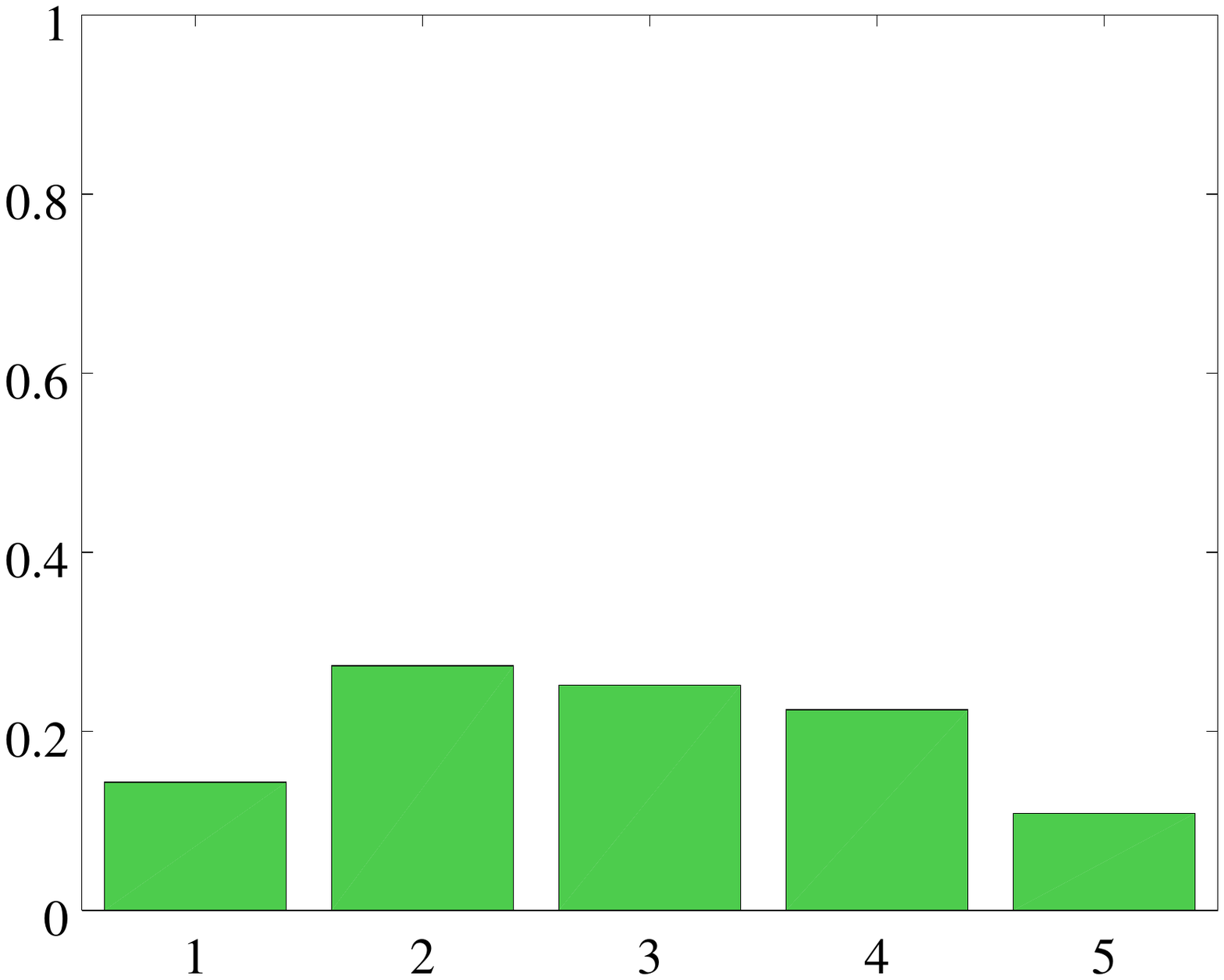} \label{fig:p-dynamics-10}} \hspace{-0.12in}
    \subfloat[$\rho^{(50)}$. Converged to $\pi_1$.]{\includegraphics[width=0.24\textwidth,height=0.17\textwidth, trim=60 200 60 200, clip=true]{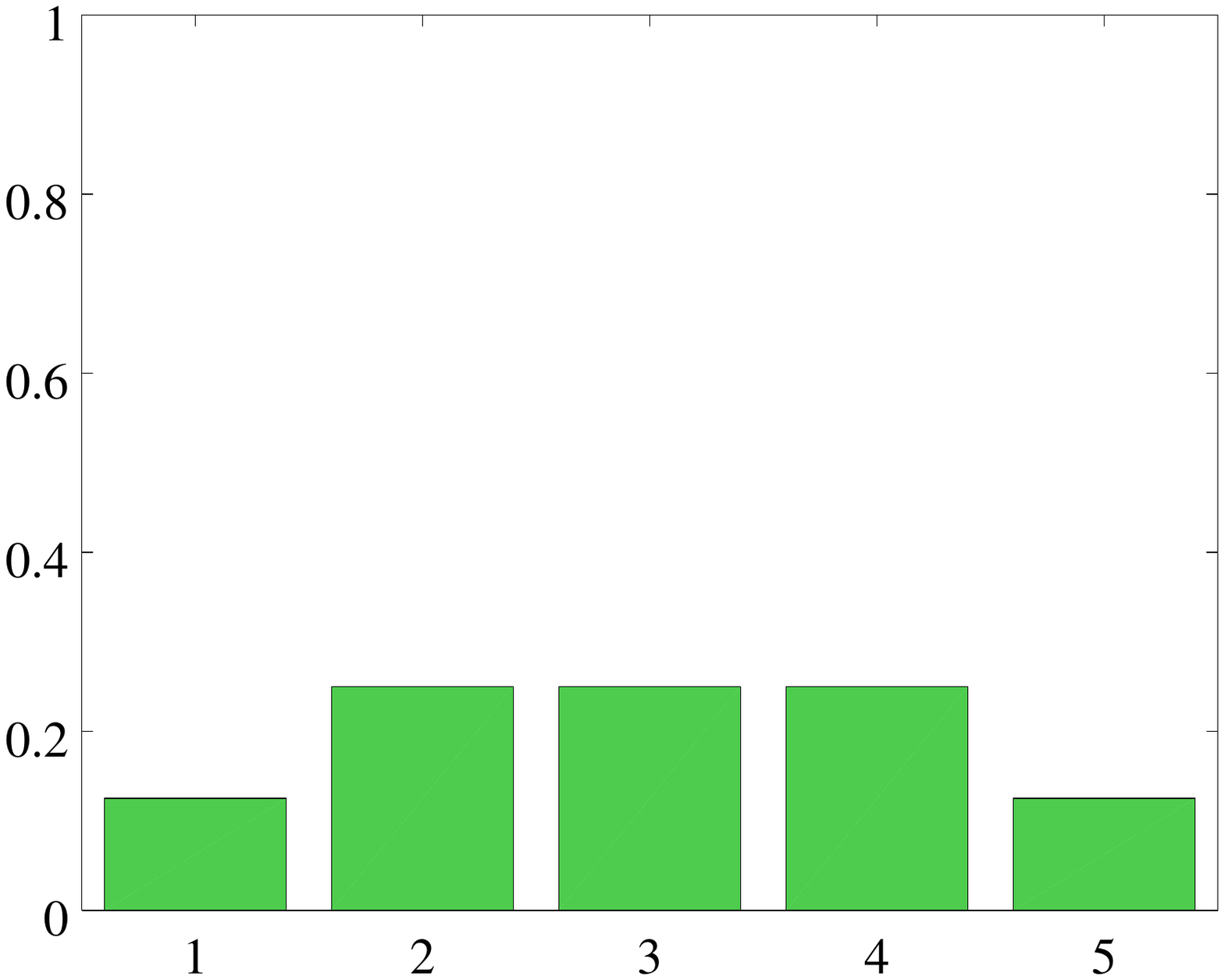} \label{fig:p-dynamics-50}} \vspace{0.1in}
   \end{tabular}
   \mycaption{The dynamics of probability distribution on the Markov chain of Figure~\ref{fig:1d-example-eigenspaces}. The distributions, $\rho^{(t)} = P^t \rho^{(0)}$, changes with $t$ and converges to the eigeenvector $\pi_1$ of $P$.} \label{fig:1d-example-p-dynamics}
\end{figure}

An illustration of the convergence to the limit~\eqref{eq:steady-state} is shown in Figure~\ref{fig:1d-example-p-dynamics}, where, starting with a distribution $\rho^{(0)}$ in which all the particle are located at $x_1$, the distribution eventually converges to $\pi_1$.


%

\mysubsection{The Monte-Carlo Perspective}

The Monte-Carlo method~\cite{Fishman-Monte-Carlo} considers an ensemble of robots that make transitions according to the transition probability matrix, $P$. Unsurprisingly, if the robots in the ensemble move around according to the transition probabilities in $P$, their eventual distribution will converge to that of the steady-state distribution, $\pi_1$. This is illustrated in Figure~\ref{fig:particles-probabilistics-1}-\ref{fig:particles-probabilistics-500} using $20000$ robots navigating on a line segment with $20$ states and the same transition probabilities as in Figure~\ref{fig:1-d-cells}.


\vspace{0.02in}
\textit{The Local Nature of the Markov Chain and the Monte-Carlo Algorithm:}
One of the key features of a Markov chain described as above and the corresponding Monte-Carlo algorithm is its \emph{local} nature -- wherever a robot is in its state space, it needs to transition only to a neighboring state, and the probabilities of transition do not depend on absolute location of its current or the neighboring states -- the probabilities depend only on the relative location of the neighboring state with respect to the current state (except for states that are close to the boundaries). This makes a Markov chain description of robot navigation important in absence of localization -- the robot may not know what its current state $x_i$ is, but it can still identify the neighboring states $x_{i-1}$ and $x_{i+1}$ to move to with designated probability that are independent of the absolute position of the robot. The exception is only at the boundaries where the transition probabilities can be different, and hence a robot needs to be able to sense the presence of a boundary and adjust its behavior accordingly.

The local nature of the transition probabilities is reflected in the matrix $P$ by the fact that it's a multi-diagonal matrix (a tri-diagonal matrix for the examples in Figure~\ref{fig:transition-prob-example} or \ref{fig:p-matrix-20}) and the columns of $P$ are vertically shifted versions of each other (except for the boundary columns, where, once again, the boundary effects are manifested).
The non-zero entries in each column of $P$ will be referred to as \emph{Kernel} of the dynamics of the particles. Thus, for the matrix in Figure~\ref{fig:transition-prob-example}, the Kernel for particles far from the boundaries consist of $[\cdots,0.4, 0.2, 0.4, \cdots]^T$, where ``$\cdots$'' represent the appropriate number of zeros depending on the particle's position. Of course at the boundaries the Kernels are different ($[0.2,0.8,\cdots]^T$ and $[\cdots,0.8,0.2]^T$ for the $P$ in Figure~\ref{fig:transition-prob-example}).

We introduce the following notation: We assign a differential/relative index of $0$ for the position that represents the current state, and differential indices $-1$ and $1$ to the two positions adjacent to the current state, and so on. Thus, the kernel vector will be represented as $K_{*} = [\cdots, k_{-1}, k_0, k_{1},\cdots]^T$. Thus in the example of \ref{fig:p-matrix-20}, for the kernel $K_{*}$ away from the boundary, $k_0 = 0.2$, $k_{\pm 1} = 0.4$ and $k_{\delta} = 0~\forall |\delta| > 1$.
We define the \emph{radius of the kernel}, $r = \max(\{|\delta| ~\big|~ k_{\delta}\neq 0\})$.
If the state under consideration is a boundary state, $b$, we will write the corresponding kernel vector as $K_b$.

\begin{figure*} 
    \begin{tabular}{cc}
     \imagetop{\begin{tabular}{c}
	\subfloat[The $P$ matrix for $n=20$.]{\includegraphics[width=0.3\textwidth, trim=0 0 0 0, clip=true]{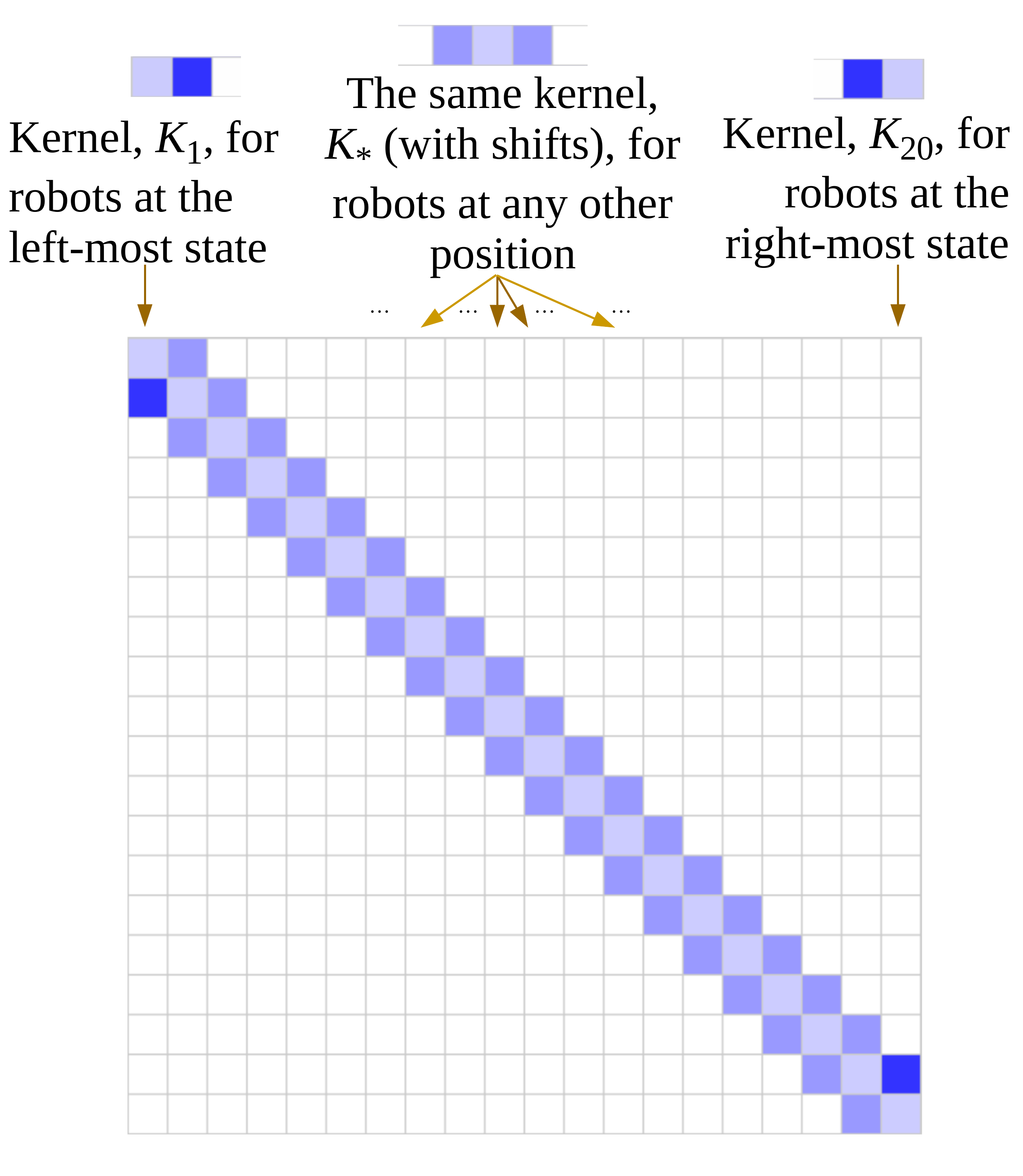} \label{fig:p-matrix-20}} \\
	\subfloat[The corresponding eigenvector, $\pi_1$.]{\includegraphics[width=0.32\textwidth, trim=0 0 0 0, clip=true]{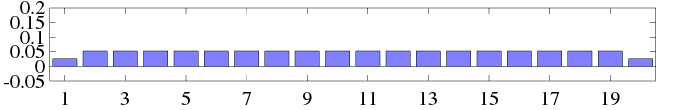} \label{fig:p-pi1-20}}
     \end{tabular} } \hspace{-0.5in}
     & 
     \imagetop{\begin{tabular}{cc}
	\subfloat[Robot distribution at $t=1$.]{\includegraphics[width=0.26\textwidth,height=0.18\textwidth, trim=50 340 50 200, clip=true]{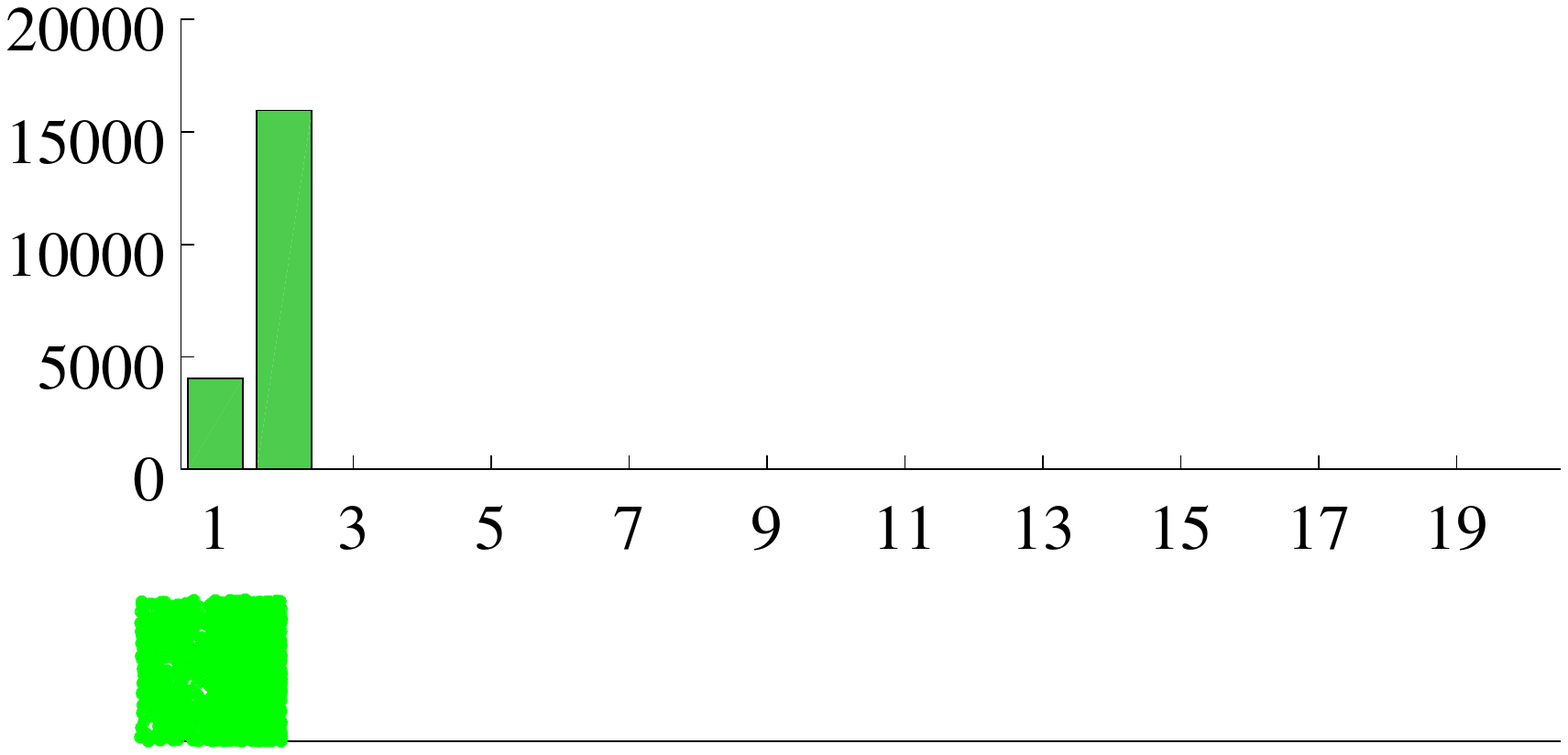} \label{fig:particles-probabilistics-1}} &
	\subfloat[Robot distribution at $t=10$.]{\includegraphics[width=0.26\textwidth,height=0.18\textwidth, trim=50 340 50 200, clip=true]{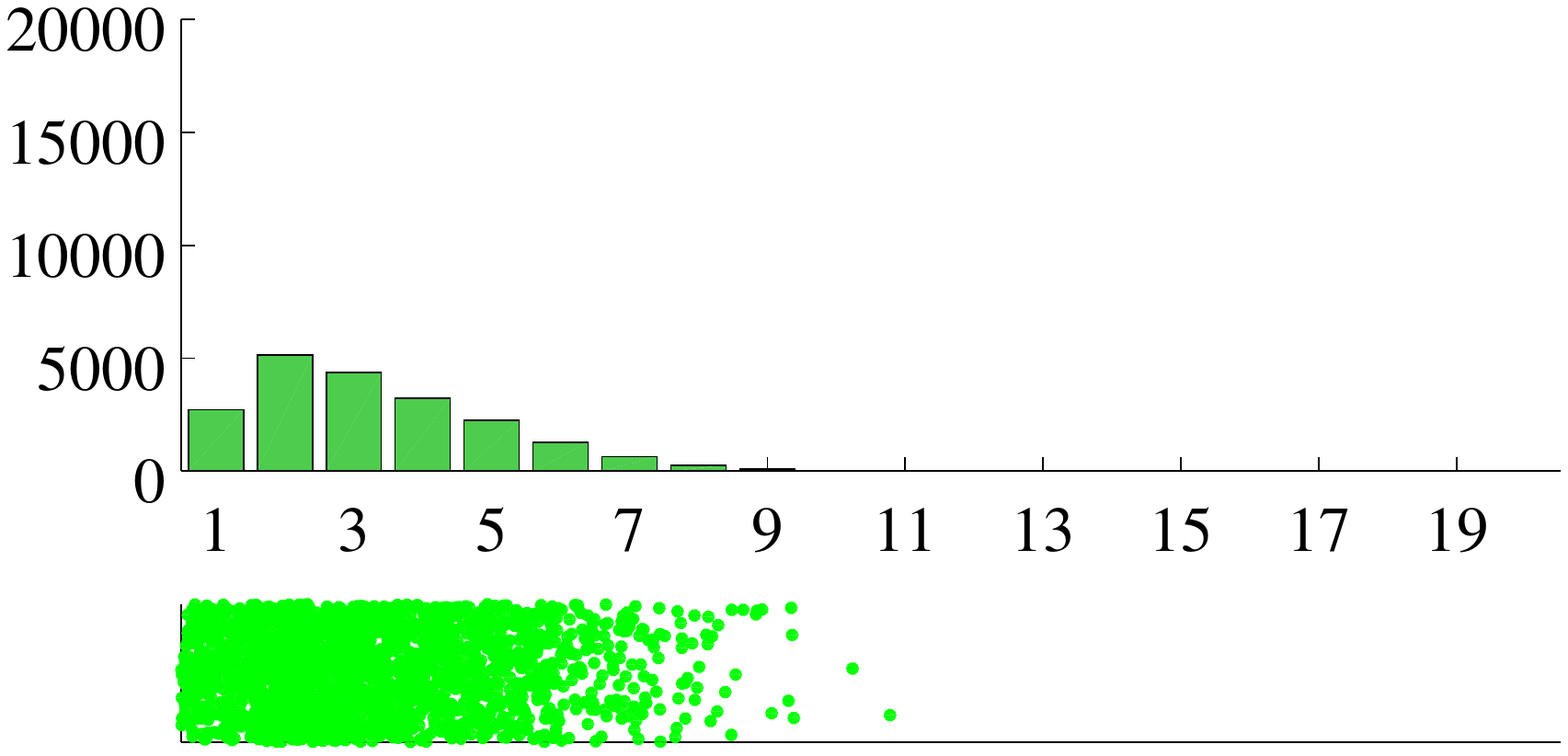} \label{fig:particles-probabilistics-10}} \\
	\subfloat[Robot distribution at $t=100$.]{\includegraphics[width=0.26\textwidth,height=0.18\textwidth, trim=50 340 50 200, clip=true]{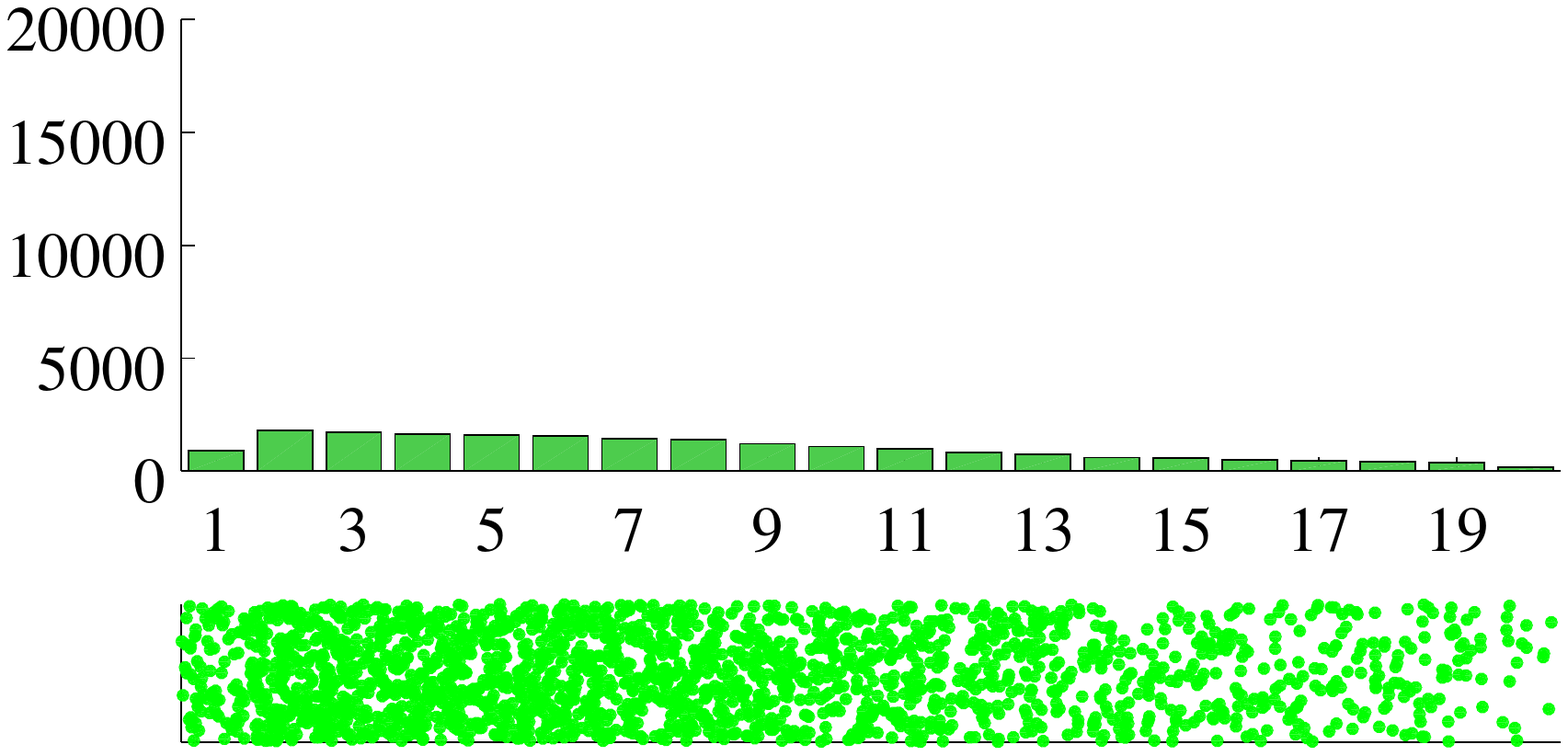} \label{fig:particles-probabilistics-100}} &
	\subfloat[Robot distribution at $t=500$.]{\includegraphics[width=0.26\textwidth,height=0.18\textwidth, trim=50 340 50 200, clip=true]{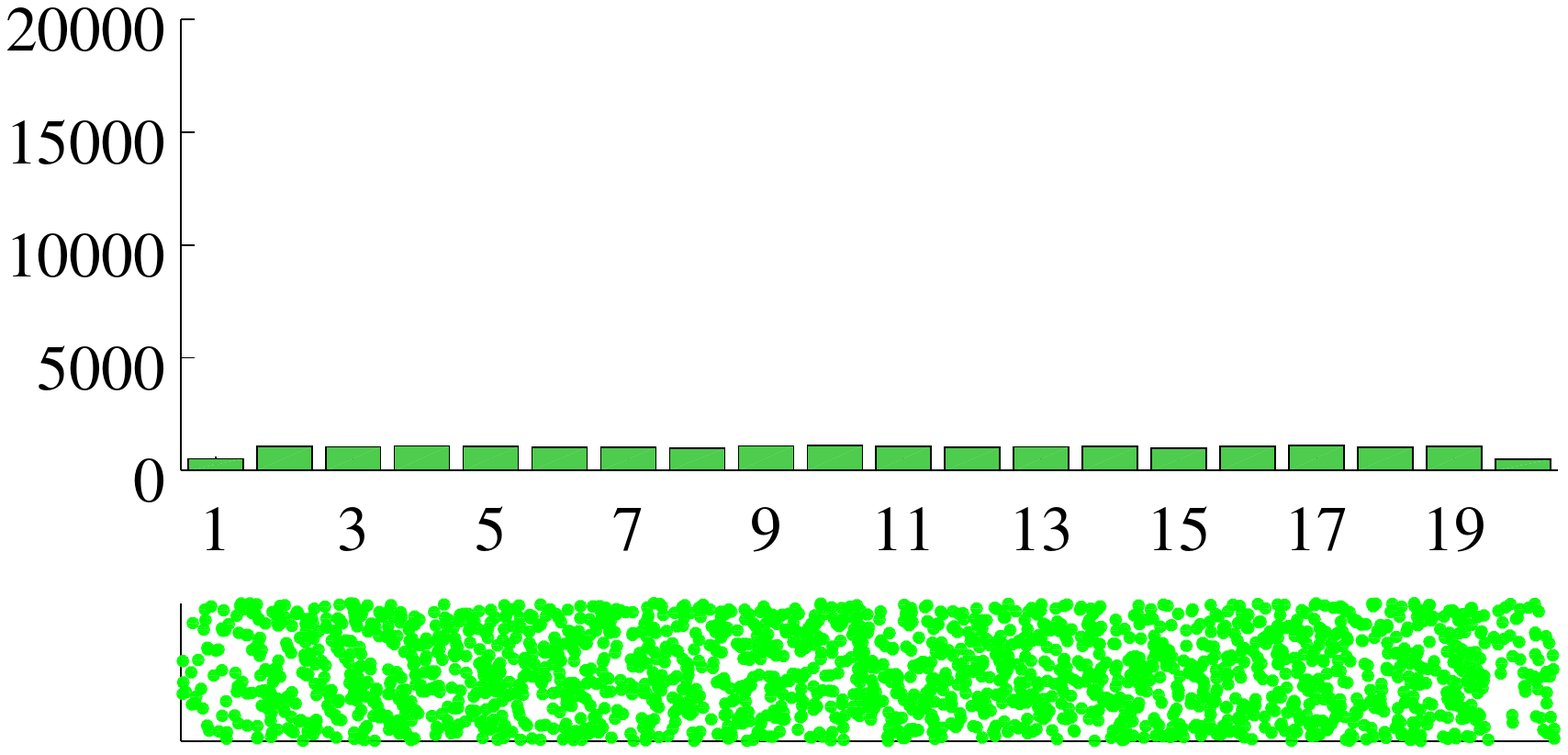} \label{fig:particles-probabilistics-500}} \vspace{0.1in}
        \\ \multicolumn{2}{c}{
	  \begin{minipage}{0.62\textwidth} \small (c-f): Unweighted Monte-Carlo algorithm using $20000$ robots, with robots transitioning according to the transition probability matrix, $P$. The green dots shown below the particle histograms is purely for visualization, with each green dot representing $10$ robots. Note how the distribution (proportion of robots in the different states) converges to $\pi_1$. \end{minipage} }
     \end{tabular} } \\
     \multicolumn{2}{c} {
	\subfloat[Aggregated weights at $t=1$.]{\includegraphics[width=0.26\textwidth,height=0.18\textwidth, trim=50 340 50 233, clip=true]{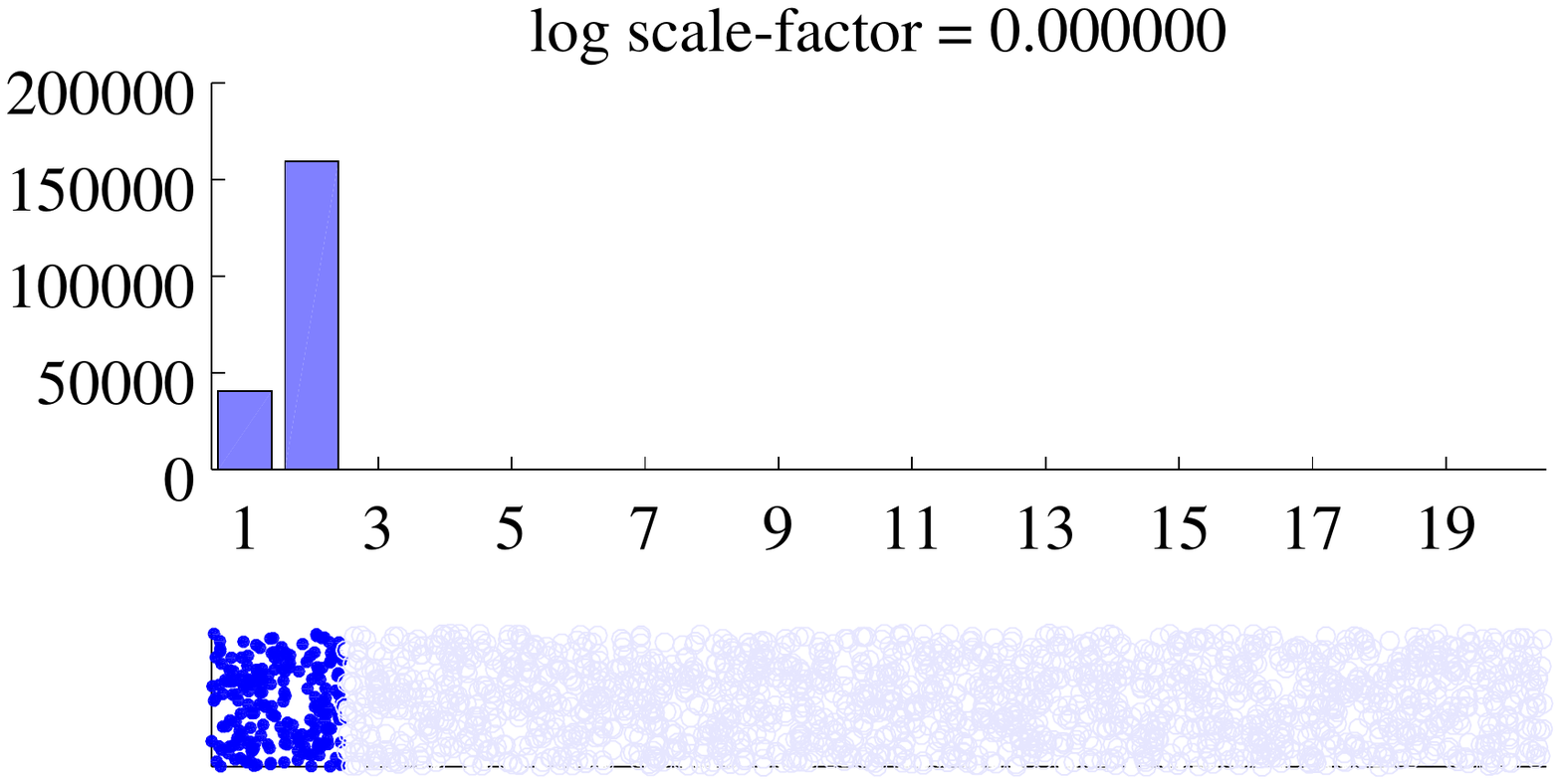} \label{fig:particles-weight-1}} \hspace{-0.19in}
	\subfloat[Aggregated weights at $t=10$.]{\includegraphics[width=0.26\textwidth,height=0.18\textwidth, trim=50 340 50 233, clip=true]{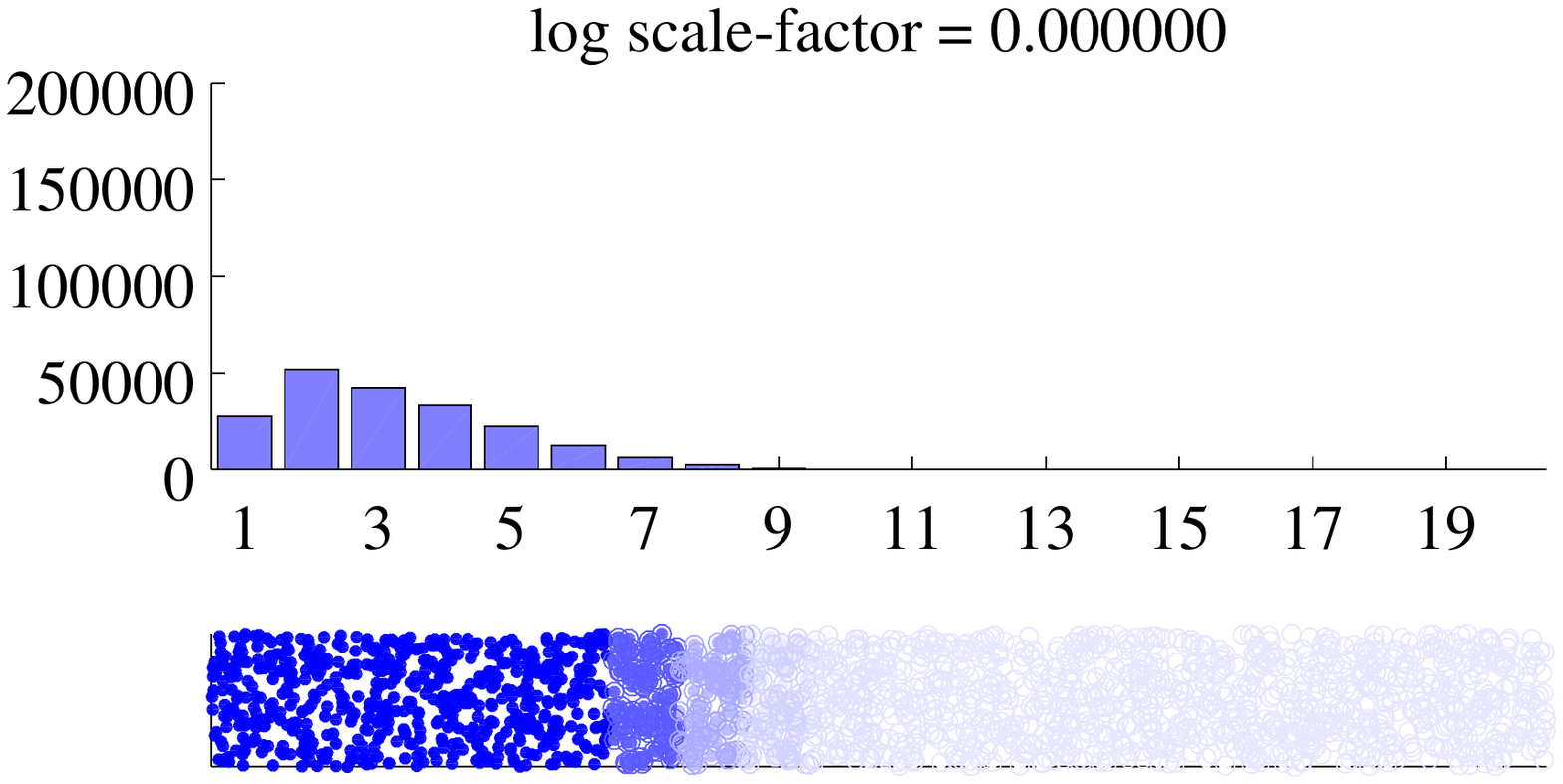} \label{fig:particles-weight-10}} \hspace{-0.19in}
	\subfloat[Aggregated weights at $t=1$.]{\includegraphics[width=0.26\textwidth,height=0.18\textwidth, trim=50 340 50 233, clip=true]{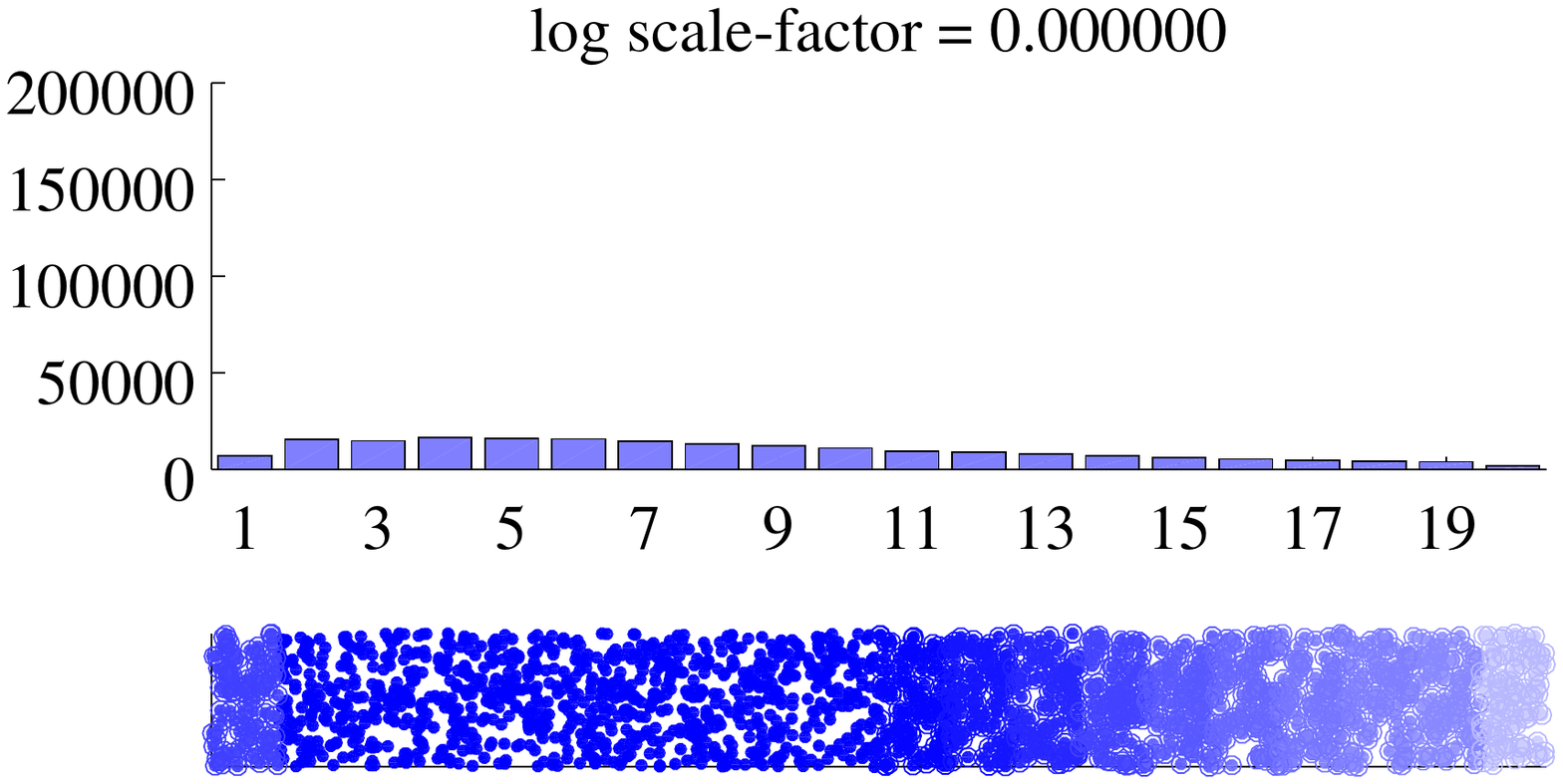} \label{fig:particles-weight-100}} \hspace{-0.19in}
	\subfloat[Aggregated weights at $t=1$.]{\includegraphics[width=0.26\textwidth,height=0.18\textwidth, trim=50 340 50 233, clip=true]{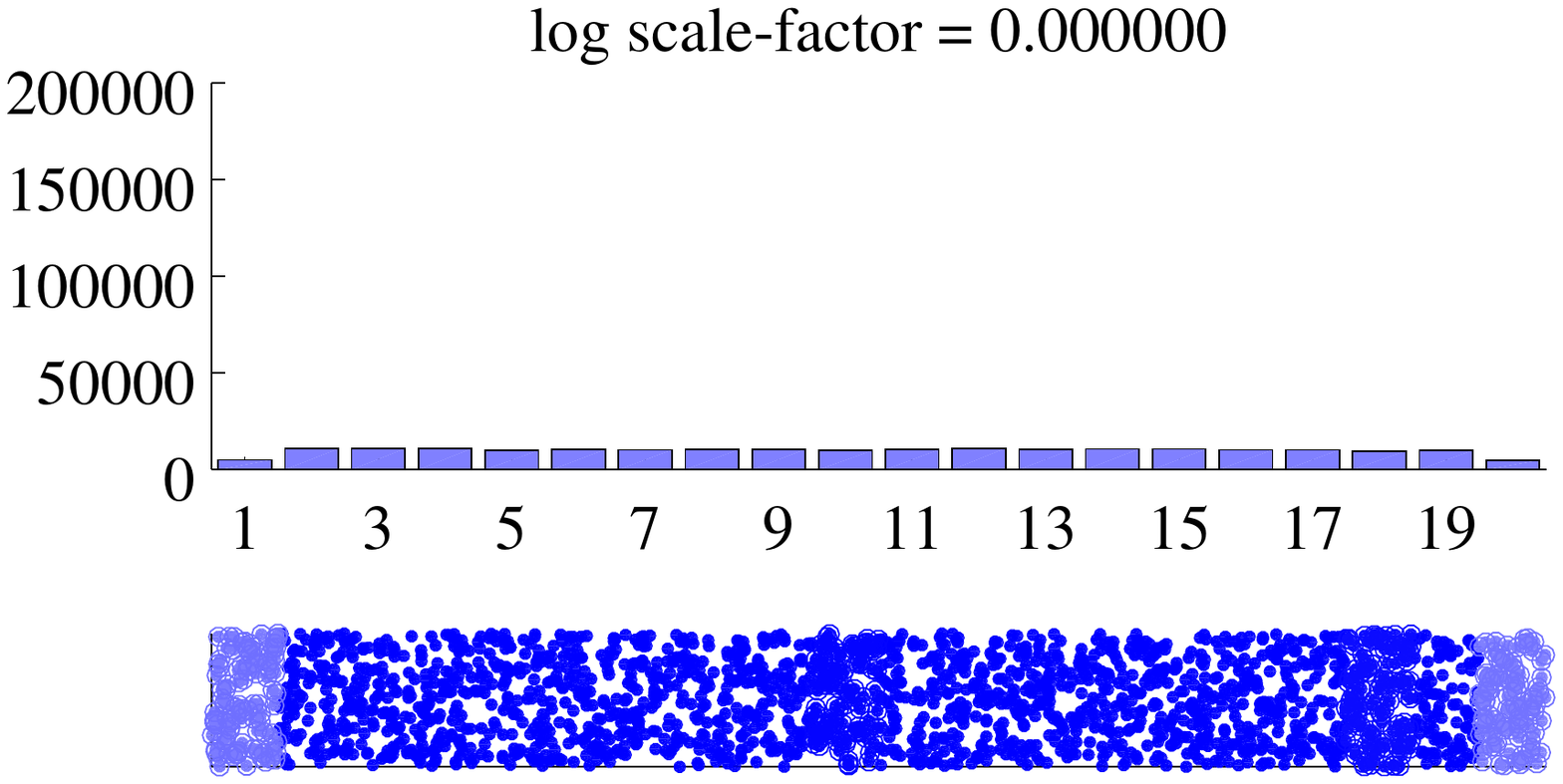} \label{fig:particles-weight-500}} \vspace{0.1in}
     } \\
     \multicolumn{2}{c}{
      \begin{minipage}{1.0\textwidth} \small (g-j): Monte Carlo algorithm using $200000$ robots with weights. Robots transition according to uniform transition probability matrix, $U$. But they update their weights according to transition weight update matrix $W = n P$. The bar plot shows the sum total of weights on the robots in each state (the \emph{aggregated weight} vector $\overline{w}$). Each blue/white dot below represent $100$ robots, and the color of the dots represent the weights (darker is weight close to $1.0$, while lighter is weight close to $0$). At $t=0$ the robots strt at $x_1$, each with a weight of $1.0$. Note how at $t=1$ robots have already distributed themselves uniformly across all the states, although the aggregated weights haven't. The aggregated weights (normalized by the total weight across all states) eventually converge to to $\pi_1$. \end{minipage}
     }
    \end{tabular}
    \vspace{0.1\squeezefactor}\mycaption{Monte-Carlo algorithm for robot swarm navigating on a line segment represented as a Markov Chain with $20$ states. (c-f): The unweighted version, and, (g-j): the weighted version. In both the cases all the robots start at state $x_1$ (left-most state) at $t=0$. In the weighted version the initial weight on each particle is $1$.
    } \label{fig:1d-markov-p-20}\vspace{0.1\squeezefactor}
\end{figure*}

\subsubsection{Monte-Carlo with Weights}

In the above description of the Monte-Carlo algorithm, the robots distribute themselves into an approximate number density of $g = N \pi_1$ (where $N$ is the number of robots).
Very often in Monte-Carlo methods one assigns weights to the robots as well~\cite{probRob:Thrun,seq:montecarlo}, and updates them based on the transitions that the robots make. This serves as a means of estimating a distribution auxiliary to the density distribution.

In such a setup, one defines a $n\times n$ \emph{transition weight update matrix}, $W$, 
with entries $W_{ij}$, which gives the factor by which a particle's weight is to be multiplied when it makes a transition from state $x_j$ to state $x_i$.
%
The following simple observation is vital:

\begin{proposition} \label{prop:weight-dynamics}
  If the robots in a Monte-Carlo algorithm transition according to transition probability matrix $Q$, 
  and uses a transition weight update matrix $W$ to update their weights, 
  then 
  the vector of aggregated weights in the states, $\overline{w} = [\overline{w}_1, \overline{w}_2,\cdots,\overline{w}_n]^T$,
  follow the dynamics
  \vspace{-0.1\squeezefactor}
  \[
  \overline{w}^{(t+1)} = (W \circ Q)\overline{w}^{(t)}
  \vspace{-0.1\squeezefactor}
  \]
  where `$\circ$' indicate a Hadamard product or an element-wise product.
%
\end{proposition}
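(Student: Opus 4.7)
The plan is to prove this by a direct expectation-tracking argument on a per-state, per-transition basis. First I would fix a target state $x_i$ and time $t$, and decompose the aggregated weight $\overline{w}_i^{(t+1)}$ as a sum over source states $x_j$: the robots that contribute to $\overline{w}_i^{(t+1)}$ are exactly those that occupied some state $x_j$ at time $t$ and then transitioned into $x_i$. Since the weight updates and transitions are independent across robots, it suffices to compute the expected contribution of a single robot in $x_j$ and then sum.

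Next I would compute this per-robot contribution. Consider a robot at $x_j$ at time $t$ carrying weight $w$. Under the dynamics, it moves to $x_i$ with probability $Q_{ij}$, and conditioned on this move, its weight becomes $W_{ij}\,w$; otherwise it contributes $0$ to $\overline{w}_i^{(t+1)}$. Hence the expected contribution of this single robot to $\overline{w}_i^{(t+1)}$ is $Q_{ij} W_{ij} w$. Summing the weights $w$ over all robots currently in $x_j$ gives $\overline{w}_j^{(t)}$, so the expected total contribution from state $x_j$ to state $x_i$ between time $t$ and $t{+}1$ is $Q_{ij} W_{ij}\, \overline{w}_j^{(t)}$.

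Then I would aggregate over all source states $j$ to get
\[
\overline{w}_i^{(t+1)} \;=\; \sum_{j=1}^{n} Q_{ij}\, W_{ij}\, \overline{w}_j^{(t)} \;=\; \sum_{j=1}^{n} (W \circ Q)_{ij}\, \overline{w}_j^{(t)},
\]
which is precisely the $i$-th component of $(W \circ Q)\,\overline{w}^{(t)}$. Stacking over $i$ yields the stated recursion.

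The only subtle point, and the one worth flagging, is that the equality $\overline{w}_i^{(t+1)} = \sum_j Q_{ij} W_{ij} \overline{w}_j^{(t)}$ holds in expectation for each individual robot, whereas the proposition is stated as a deterministic recursion on the observed aggregated weights. This is the statistical-swarm assumption already invoked throughout the paper: for the large ensemble sizes under consideration (cf.\ the $2\!\times\!10^4$ and $2\!\times\!10^5$ robots used in Figure~\ref{fig:1d-markov-p-20}), the law of large numbers makes the empirical sum concentrate around its expectation, so the Hadamard-product recursion describes the macroscopic evolution of $\overline{w}$ to vanishing relative error. I would include a brief remark to that effect rather than a full concentration inequality, since the rest of the paper treats the dynamics at this mean-field level.
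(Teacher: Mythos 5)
Your proof is correct and follows essentially the same route as the paper's: both are linearity-of-expectation arguments that compute the expected weight flowing from state $x_j$ into state $x_i$ as $Q_{ij}W_{ij}\,\overline{w}_j^{(t)}$ and sum over $j$ (the paper merely phrases it via the factorization $\overline{w}_i^{(t)} = w_i^{(t)}\rho_i^{(t)}$ into average weight times robot proportion, whereas you sum per-robot contributions directly). Your explicit remark that the recursion holds only in expectation and relies on a law-of-large-numbers/mean-field limit is a point the paper leaves implicit, and is worth keeping.
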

\begin{proof}
 Suppose at time $t$ the distribution (proportion) of the robots in the different states is $\rho^{(t)}$.
 Likewise, suppose $w^{(t)}$ is the average weight per particle in the different states (so that the aggregated weight, $\overline{w}^{(t)}_i = w^{(t)}_i \rho^{(t)}_i$, for state $x_i$).
 At the next time step, the expected number (proportion) of robots entering sate $x_i$ from state $x_j$ is $Q_{ij} \rho^{(t)}_j$. Each of those particle, on an average will bring in weight $W_{ij} w^{(t)}_j$ into state $x_i$.
Thus, at time $t+1$, the aggregated weight in state $x_i$ will be $\overline{w}^{(t+1)}_i = \sum_j W_{ij} w^{(t)}_j Q_{ij} \rho^{(t)}_j = \sum_j (W_{ij} Q_{ij}) \overline{w}^{(t)}_j$.
%
%
\end{proof}

Based on the above, we can choose the transition probability matrix, $Q$, to be equal to the the uniform probability matrix, $U$ (which is a matrix in which all the entries are equal to $1/n$), and choose the transition weight update matrix, $W$, to be equal to $n P$.
In doing so, the dynamics of the aggregated weight vector, $\overline{w}$, becomes indistinguishable from the earlier dynamics of the probability distribution in \eqref{eq:prob-dynamics}:
\vspace{-0.1\squeezefactor}
\begin{equation} \label{eq:particle-weight-dynamics}
 \overline{w}^{(t+1)} = (W \circ Q)\overline{w}^{(t)} = (n P \circ U)\overline{w}^{(t)} = P \overline{w}^{(t)}
 \vspace{-0.1\squeezefactor}
\end{equation}

Thus in the weighted version of the Monte-Carlo algorithm, the probability transition matrix is chosen to be uniform, but every time a robot makes a transition to state $i$ from state $j$, it multiplies its own weight (a real number maintained by the robot) by $n P_{ij}$.
For a large number of robots, due to \eqref{eq:particle-weight-dynamics}, this ensures that the eventual distribution of the aggregated weights,
$\overline{w}$, converges to a scalar multiple of $\pi_1$.
This is illustrated in Figures~\ref{fig:particles-weight-1}-\ref{fig:particles-weight-500}.

Just as before, the columns of the transition weight update matrix, $W = n P$, are use to define the kernel vectors -- $K_{*}$ for state away from boundary, $K_b$ for a boundary state $b$.
The overall algorithm that a particle follows in performing a transition is summarized below:

\begin{minipage}{0.95\columnwidth}
\begin{algorithm}{Individual Robot Algorithm \newline in Weighted Monte-Carlo} \label{alg:individual-robot}
 \newalgline & \alglinecontent{Detect (sense) if there is a boundary in the neighborhood. If so (say the robot is in boundary cell $b$), choose the boundary kernel $K_b$, otherwise choose the generic kernel $K_{*}$.} \\
 \newalgline & \alglinecontent{Choose an action that will transition the robot to a location $\delta$ relative to robot's current state.} \\
 \newalgline & \alglinecontent{Transition using the chosen action. If successful, multiply own weight with kernel entry $k_\delta$ (which is zero if the action is larger than the radius $r$ of the kernel).} \\
 \newalgline & \alglinecontent{Communicate with other robots inside the new state and compute average weight of the robots in that state. Set own weight to the average value.}
\end{algorithm}
\end{minipage}
\vspace{0.1in}

The last step (average weight computation within a state) is necessary to smoothen noise and ensure that the dynamics \eqref{eq:particle-weight-dynamics} of the weights hold.

\emph{Remark on local nature of the algorithm:} Although in the weighted version of the Monte-Carlo algorithm robots do not transition according to the Markov chain's transition probability matrix, $P$, as far as the aggregated weights are concerned, its dynamics is indistinguishable from the dynamics of robot distribution in the unweighted case.
Furthermore, even though the robots can make larger transitions (according to the uniform transition probability matrix, $U$), the \emph{local nature} of the Monte-Carlo still holds since the transition weight update is non-zero only if the robot transitions to one of the \emph{neighbors} (corresponding to the non-zero elements in the kernel) -- longer jumps to distant states do not require the robot to sense its surroundings (for kernel determination) or make non-trivial updates to its weights -- it simply multiplies its weight by zero (\emph{i.e.} set it to zero).

The advantage of the weighted Monte-Carlo over the unweighted version is that the weights in the weight matrix, $W$, need not in general satisfy the properties of a Stochastic matrix. In fact one can consider a weight matrix, the columns of which do not add up to unity, and some of the entries can even be negative. None of these violate Proposition~\ref{prop:weight-dynamics}, and hence the dynamics of the aggregated weight vector, \eqref{eq:particle-weight-dynamics}, still holds true. We will exploit this key observation in the next section.


\mysection{Harmonic Attractor Dynamics}

As described earlier, the stable/attractor eigenvector of the dynamics of the stochastic matrix, $P$, and hence the weight matrix $W = nP$, is $\pi_1$ -- the eigenvector corresponding to eigenvalue $1$ of $P$.
We now construct a different set of matrices whose attractors are the other harmonics of the Markov Chain (\emph{i.e.}, the other eigenvectors of $P$).
\begin{proposition}[Harmonic Attractor Dynamics]
 Suppose $P$ is a stochastic matrix with eigenvalues $\lambda_1, \lambda_2, \cdots, \lambda_n$ (with $\lambda_1=1, ~|\lambda_i| < 1, i\neq 1$) and corresponding eigenvectors $\pi_1,\pi_2,\cdots,\pi_n$.
 If $f(u)$ is a polynomial in $u$ with $f(0)=1$ and $|f(\lambda_i - \lambda_a)| < 1, ~\forall i\neq a$, then
 the eigenvectors of $M_a = f(P - \lambda_a I)$ are the same eigenvectors $\pi_1,\pi_2,\cdots,\pi_n$,
 with eigenvalue corresponding to $\pi_a$ equal to $1$, and the magnitudes of all the other eigenvalues less than $1$.
 Thus the dynamics $v^{(t+1)} = M_a v^{(t)}$ always converges to $\pi_a$.
\end{proposition}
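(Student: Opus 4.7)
The plan is to verify the claim by leveraging the spectral decomposition of $P$: since $M_a$ is a polynomial in $P$ (via $P - \lambda_a I$), every eigenvector of $P$ is automatically an eigenvector of $M_a$, and the corresponding eigenvalues can be read off directly from $f$. The entire argument therefore reduces to routine polynomial calculus on the spectrum, followed by the standard ``dominant-mode survives'' expansion of $v^{(0)}$ in the eigenbasis, exactly paralleling the earlier derivation of \eqref{eq:steady-state}.

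First I would observe that if $P \pi_i = \lambda_i \pi_i$, then $(P - \lambda_a I)\pi_i = (\lambda_i - \lambda_a)\pi_i$, and by induction $(P - \lambda_a I)^k \pi_i = (\lambda_i - \lambda_a)^k \pi_i$ for every $k \geq 0$. Writing $f(u) = \sum_k c_k u^k$ and applying linearity gives $M_a \pi_i = f(\lambda_i - \lambda_a)\,\pi_i$, so $\pi_i$ is an eigenvector of $M_a$ with eigenvalue $\mu_i := f(\lambda_i - \lambda_a)$. The hypothesis $f(0) = 1$ then yields $\mu_a = f(0) = 1$, while the assumption $|f(\lambda_i - \lambda_a)| < 1$ for $i \neq a$ gives $|\mu_i| < 1$ for every $i \neq a$. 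This already establishes the spectral portion of the proposition.

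For the convergence statement, I would decompose an arbitrary initial condition in the eigenbasis, $v^{(0)} = \sum_i \alpha_i \pi_i$, which is legitimate because (as noted in the definition of harmonics) $\{\pi_1, \ldots, \pi_n\}$ spans $\mathbb{R}^n$. Then $v^{(t)} = M_a^t v^{(0)} = \sum_i \alpha_i \mu_i^t \pi_i$, and since $|\mu_i| < 1$ for $i \neq a$ while $\mu_a = 1$, every term except $\alpha_a \pi_a$ decays geometrically to zero. Hence $v^{(t)} \to \alpha_a \pi_a$, i.e. a scalar multiple of $\pi_a$, matching the attractor convention set earlier.

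The only genuine obstacle is the implicit diagonalizability of $P$: the eigenvector identity in Step 1 goes through for any single eigenpair, but the clean Step 3 expansion requires that the $\pi_i$ really do form a basis. If $P$ had defective Jordan blocks, one would need to replace the plain expansion by a generalized-eigenspace argument and check that $M_a^t$ still contracts on each non-$\pi_a$ block (the spectral radius on that block is $|\mu_i|<1$, so the Jordan-block norm $\|J^t\|$ still decays, but with a polynomial prefactor). The paper explicitly postulates the basis property, so I would simply invoke it and flag the diagonalizability hypothesis rather than develop the Jordan-form machinery.
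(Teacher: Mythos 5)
Your proposal is correct and follows essentially the same route as the paper's own proof: the core step in both is the identity $(P-\lambda_a I)^k\pi_i=(\lambda_i-\lambda_a)^k\pi_i$ followed by linearity to get $M_a\pi_i=f(\lambda_i-\lambda_a)\pi_i$, with $f(0)=1$ giving the unit eigenvalue at $i=a$. You go slightly further than the paper by writing out the eigenbasis expansion for the convergence claim (which the paper leaves to its earlier steady-state discussion) and by flagging the implicit diagonalizability assumption, both of which are reasonable additions rather than a different method.
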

\begin{proof}
Since $\pi_i$ is an eigenvector of $P$ with eigenvalue $\lambda_i$, we have 
\[ \begin{array}{l}
  (P - \lambda_a I)^m ~\pi_i =  (P - \lambda_a I)^{m-1} (P\pi_i - \lambda_a I \pi_i) \\
  =  (P - \lambda_a I)^{m-1} (\lambda_i \pi_i - \lambda_a I \pi_i) = (\lambda_i - \lambda_a) (P - \lambda_a I)^{m-1} ~\pi_i \\
  = (\lambda_i - \lambda_a)^2 (P - \lambda_a I)^{m-2} ~\pi_i = \cdots = (\lambda_i - \lambda_a)^m ~\pi_i
 \end{array} \]
Since $f$ is a polynomial, we thus have $f(P - \lambda_a I) \pi_i = f(\lambda_i - \lambda_a) \pi_i$.
Thus, $\pi_i, i=1,2,\cdots,n$ are eigenvectors of $f(P - \lambda_a I)$ with eigenvalues $f(\lambda_i - \lambda_a) $. 
\end{proof}

\noindent Examples of such polynomials:
\vspace{-0.2\squeezefactor}
\begin{eqnarray} 
 \text{Second order:} & f(u) = 1 - \frac{\beta + 1}{\Delta_a^2} u^2 \label{eq:f-2nd}  \vspace{-0.2\squeezefactor} \\
 \text{Fourth order:} & f(u) = 1 - \frac{3(\beta+1)}{\Delta_a^2} u^2 + \frac{2(\beta+1)}{\Delta_a^4} u^4 \label{eq:f-4th} \vspace{-0.2\squeezefactor}
\end{eqnarray}
\noindent where $\Delta_a = \max_i |\lambda_i - \lambda_a|$ and $|\beta|<1$ is a parameter.
We can even choose to have different $f$ for computing the different $M_a$ (for example, $f(u) = 1 + u$ is sufficient when $a=1$ since $-2<\lambda_i - \lambda_1 < 0, ~i\neq 1$).\footnote{Non-polynomial examples include $e^{(P-\lambda_a I)^2}$, but they break the \emph{local nature} of the Monte-Carlo algorithm, as discussed later.}


From a convergence rate and numerical stability standpoint, 
convergence is attained faster in the dynamics of $M_a$ when the magnitudes of eigenvalues $f(\lambda_i - \lambda_a)$ are farther from $1$ (since then the components of $\overline{w}$ along the eigenvectors other than $\pi_a$ will decay faster).
Thus, for computing $M_a$ using a general order-$r$ polynomial, $f(u) = 1 + \kappa_1 u + \kappa_2 u^2 + \cdots + \kappa_r u^r$, 
the coefficients, $\kappa = \{\kappa_1, \kappa_2, \cdots, \kappa_r\}$, can be computed by solving the following optimization problem
\vspace{-0.2\squeezefactor}
\begin{eqnarray} \label{eq:optimization}
 \kappa^{*} & = & \argmin_{\kappa} \left( \max_{i\neq a} ~f(\lambda_i - \lambda_a) \right) \nonumber \\
  & & \text{s.t.} \quad -\beta \leq f(\lambda_i - \lambda_a) < 1-\epsilon \vspace{-0.2\squeezefactor}
\end{eqnarray}
where $-1<\beta \leq 1$ is a parameter and $0 < \epsilon \ll 1$ is a very small positive number.
This is a nonlinear optimization problem, and can be solved using sequential quadratic programming~\cite{nocedal2006numerical}. For all practical purposes we choose $\beta=0$ to ensure that all the eigenvalues of $M_a$ are positive (in order to avoid oscillations).

\mysubsection{Monte-Carlo with $M_a$ as Transition Weight Update Matrix}

\begin{figure*} 
    \begin{tabular}{cc}
     \imagetop{\begin{tabular}{c}
     	\subfloat[The harmonic/eigenvector, $\pi_5$, of the stochastic matrix, $P$, in Figure~\ref{fig:p-matrix-20}. ($L^1$-normalized)]{\includegraphics[width=0.28\textwidth, trim=60 200 60 210, clip=true]{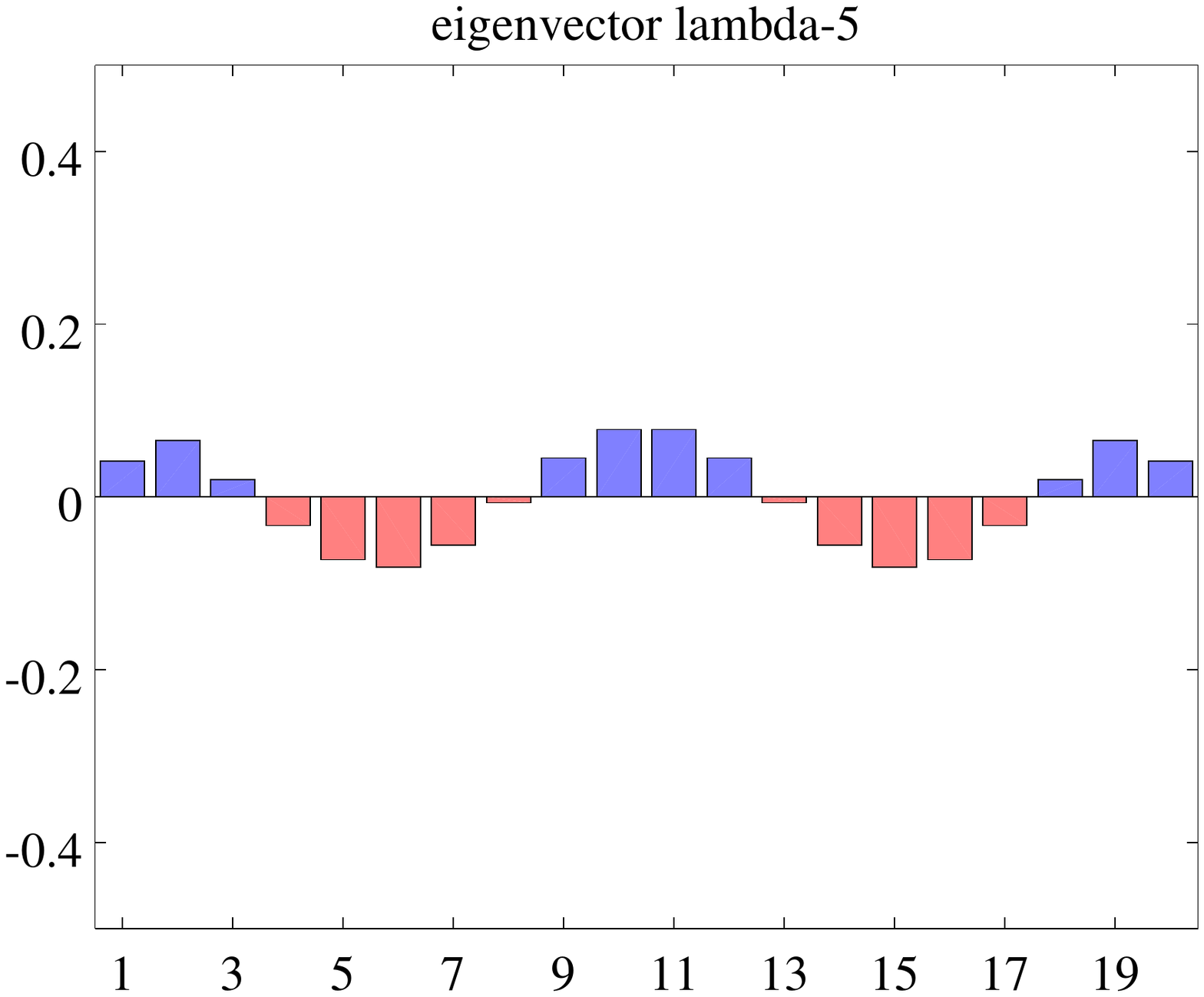} \label{fig:pi5-20}} \\
	\subfloat[The matrix $M_5$ computed using a $4^{\text{th}}$ order polynomial, $f$. \emph{Blue:} positive values, \emph{red:} negative values, \emph{white:} zero. Note that the \emph{radius} of the kernel is $4$.]{\hspace{0.15in}\includegraphics[width=0.3\textwidth, trim=0 0 0 0, clip=true]{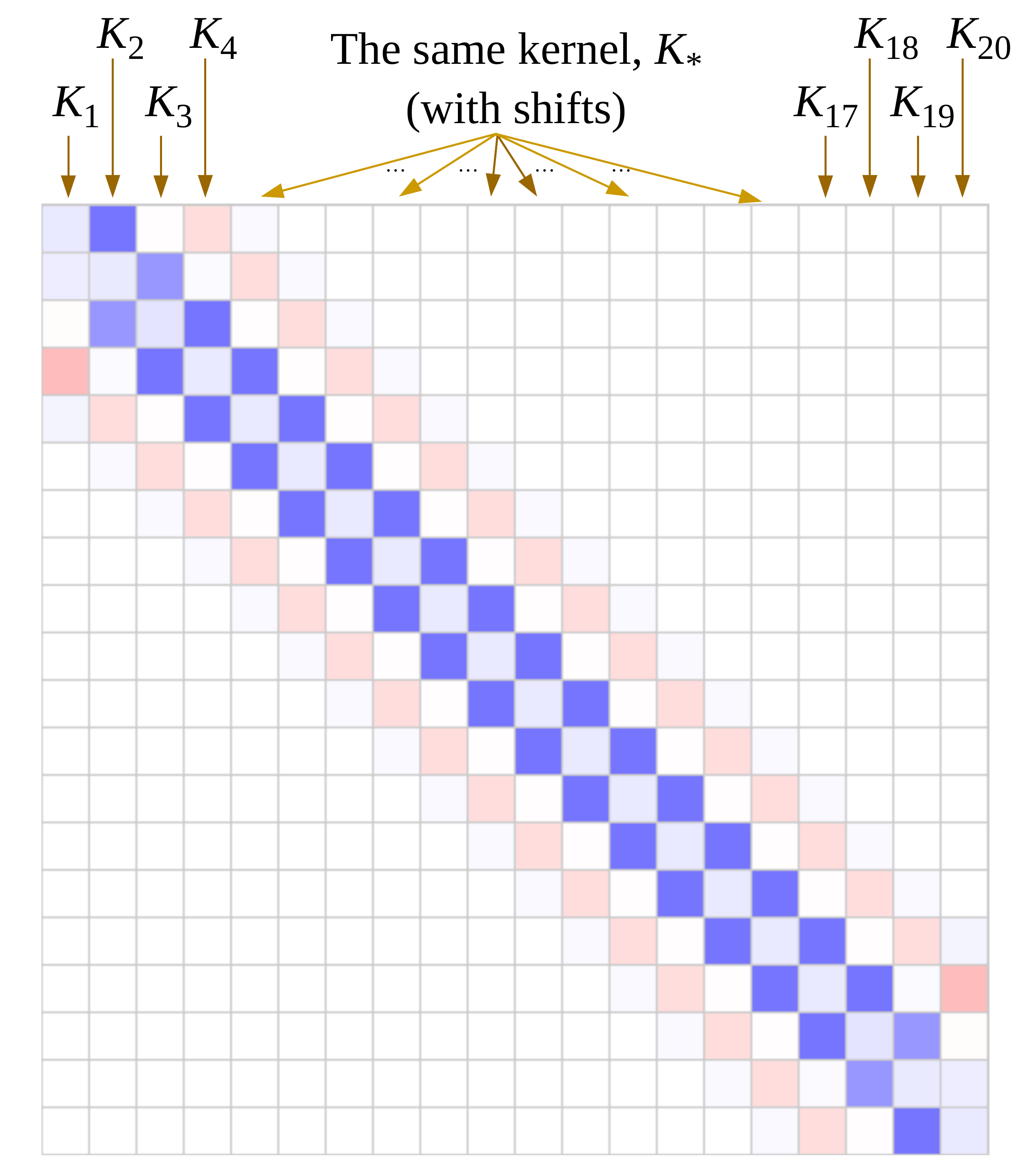}\hspace{0.1in} \label{fig:m5-matrix-20}}
     \end{tabular} } \hspace{-0.2in}
     & 
     \imagetop{\begin{tabular}{cc}
 	\subfloat[Aggregated weights at $t=1$.]{\includegraphics[width=0.3\textwidth,height=0.2\textwidth, trim=50 340 50 233, clip=true]{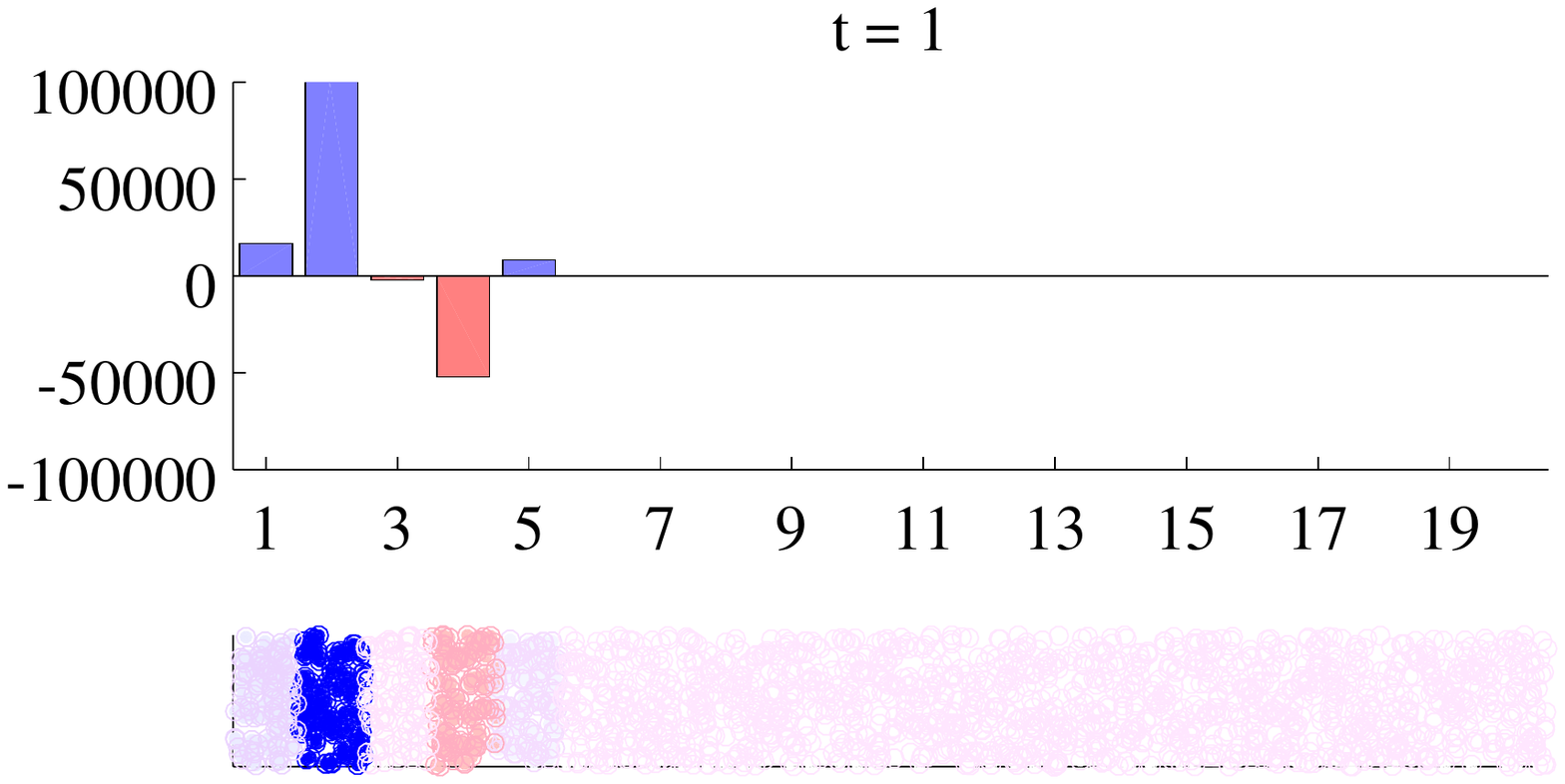} \label{fig:particles-weighted-mDynamics-1}} &
 	\subfloat[Aggregated weights at $t=20$.]{\includegraphics[width=0.3\textwidth,height=0.2\textwidth, trim=50 340 50 233, clip=true]{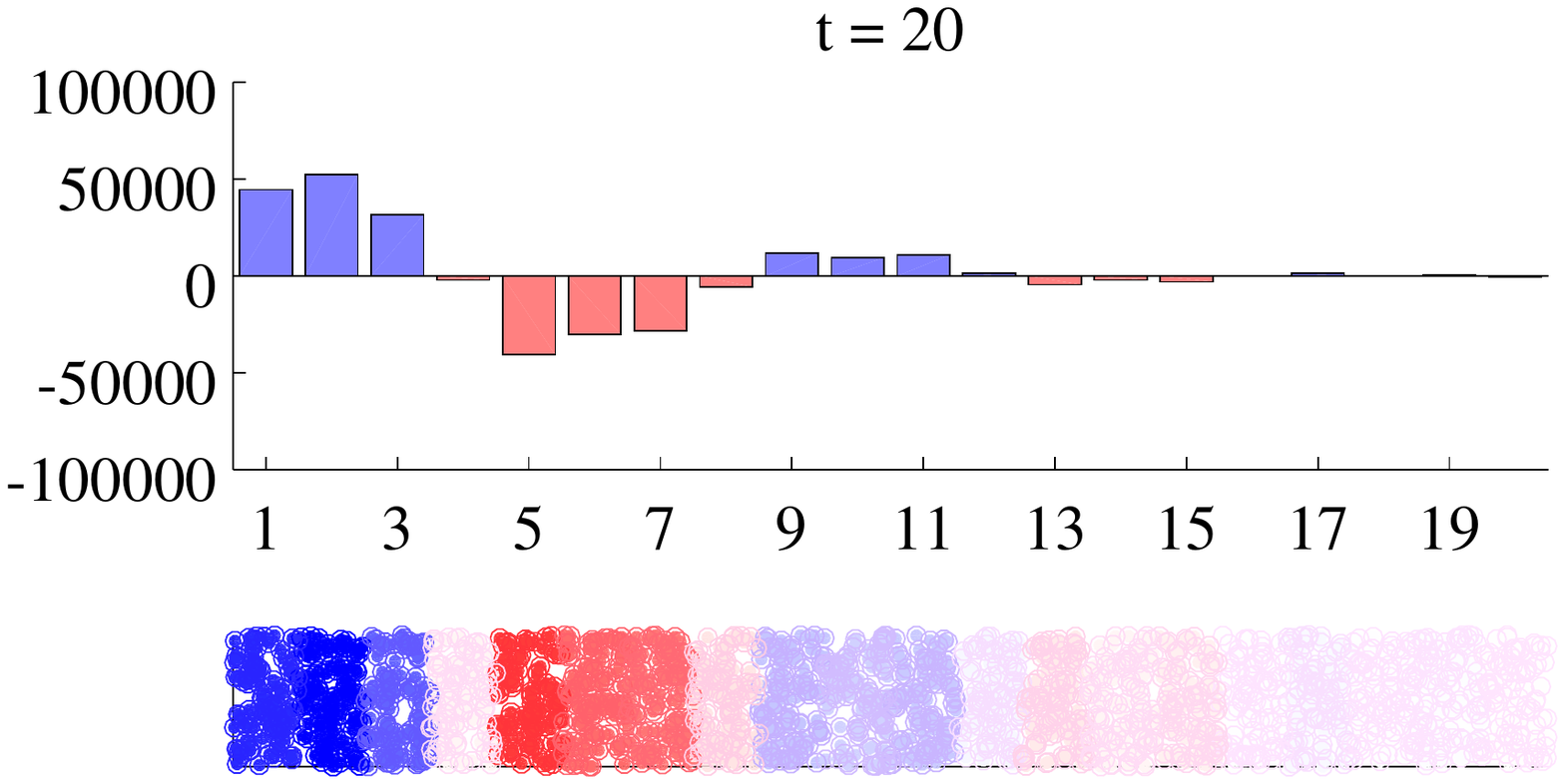} \label{fig:particles-weighted-mDynamics-20}} \vspace{0.1in} \\
 	\subfloat[Aggregated weights at $t=50$.]{\includegraphics[width=0.3\textwidth,height=0.2\textwidth, trim=50 340 50 233, clip=true]{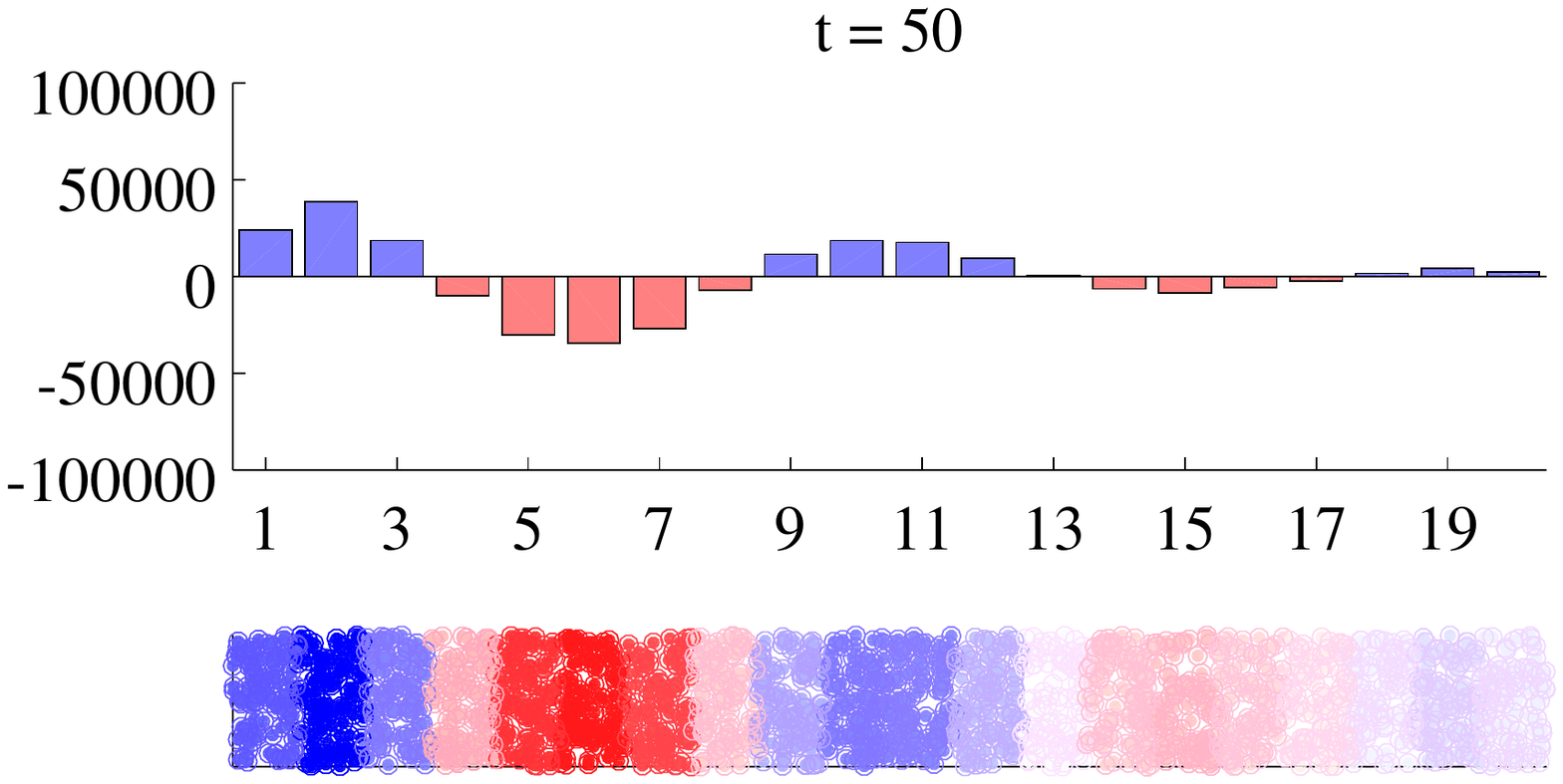} \label{fig:particles-weighted-mDynamics-50}} &
 	\subfloat[Aggregated weights at $t=200$.]{\includegraphics[width=0.3\textwidth,height=0.2\textwidth, trim=50 340 50 233, clip=true]{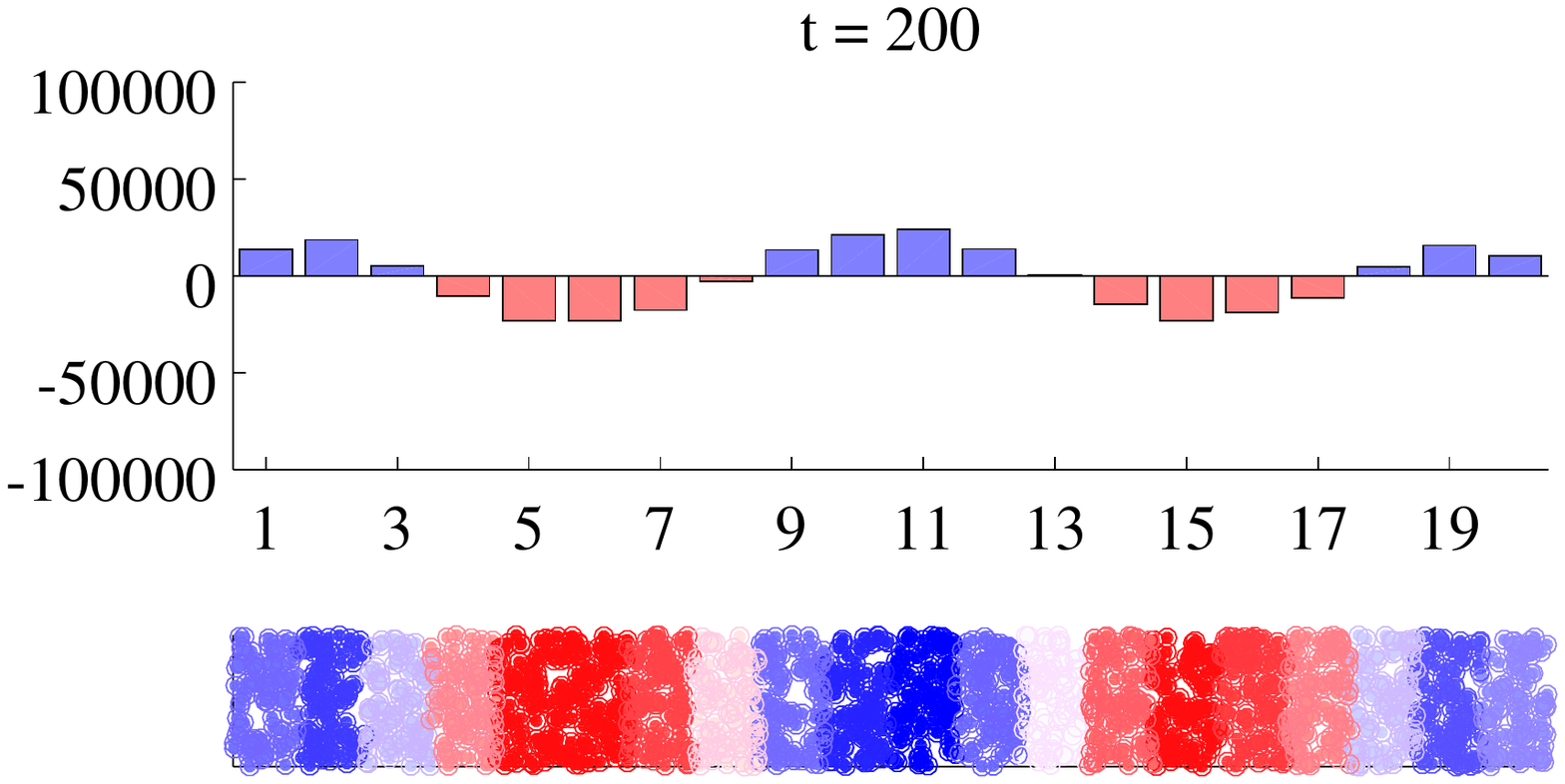} \label{fig:particles-weighted-mDynamics-200}} \vspace{0.1in}
        \\ \multicolumn{2}{c}{
	  \begin{minipage}{0.62\textwidth} \small (c-f): Monte Carlo algorithm using $200000$ robots with weights, all starting at $x=1$ at $t=0$. Robots transition according to uniform transition probability matrix, $U$. But they update their weights according to transition weight update matrix $W = n M_5$. The bar plot shows the aggregated weight vector $\overline{w}$. Each dot below represent $100$ robots, with the color of the dots indicating the weights (\emph{red:} negative, \emph{blue:} positive, \emph{lighter:} lower magnitude, \emph{darker:} higher magnitude). The aggregated weights (when normalized by the total weight across all states) eventually converge to to $\pi_5$. \end{minipage} }
     \end{tabular} }
    \end{tabular}
    \mycaption{The dynamics of $M_5$ (computed using \eqref{eq:f-4th}, with $\beta=0.7$) has the harmonic $\pi_5$ as attractor.} \label{fig:1d-harmonic5-dynamics}
\end{figure*}

If we use uniform transition probabilities, $U$, as before, and use $W = n M_a$ as the transition weight update matrix, then by Proposition~\ref{prop:weight-dynamics} the dynamics of the vector of aggregated weights, $\overline{w}$, is governed by
\vspace{-0.2\squeezefactor}
\[
 \overline{w}^{(t+1)} = (n M_a \circ U)\overline{w}^{(t)} = M_a \overline{w}^{(t)}
 \vspace{-0.2\squeezefactor}
\]
This implies that the aggregated weight vector, $\overline{w}$, will converge to (a scalar multiple of) the harmonic $\pi_a$. This is illustrated in Figure~\ref{fig:1d-harmonic5-dynamics}, where we choose the same Markov chain with $20$ states as in Figure~\ref{fig:1d-markov-p-20}, but the dynamics is that of $M_5$.

\mysubsection{Local Nature of Weighted Monte-Carlo with $M_a$ Dynamics}

The advantage of having a $r^{\text{th}}$ order polynomial for $f$ is that the resulting matrix, $M_a = f(P - \lambda_a I)$, has non-zero entries for transitions within a \emph{radius} of $r$ of the robot's current state. Since the highest power of $P$ in $f(P - \lambda_a I)$ is $r$, the element in the position $(i,j)$ in matrix $M_a$ can be non-zero if only if states $x_i$ can be reached from state $x_j$ in at most $r$ hops in the Markov Chain whose transition probability matrix is $P$. 

Thus the kernel $K_{*}$ of the dynamics of $M_a$ has a radius of at most $2r + 1$, which is the same kernel for all the states except for boundary states, $b$, where the kernel is $K_b$. Thus, if the order of the polynomial $f$ is $r$, then the robot needs to be able to sense within a \emph{disk} of radius $r$ around itself, in oder to determine if it needs to use a boundary kernel (if a boundary state is sensed in that disk), otherwise it uses the generic kernel $K_{*}$ to update weight upon taking an action. [Note: Here, by ``disk of radius $r$'', we mean the states that are within $r$ hops in the Markov chain.]


\mysubsection{Conservation Principal and Initial Weight Choice}

As in case of both the dynamics of $P$ and $M_a$, the aggregated weight vector $\overline{w}$ converges to a scalar multiple of the dynamics' attractor harmonic. This is because the $1$-dimensional vector space spanned by the attractor harmonic constitutes the eigenspace corresponding to the eigenvalue of $1$.
The following proposition gives the recipe for choosing the right initial weights for the particles in order to ensure that the final converged weight is as desired.
\begin{proposition}[Conservation of Weight Projection] \label{prop:conservation}
 Suppose $M_a$ is a matrix with eigenvectors $\pi_1,\pi_2,\cdots,\pi_n$ and corresponding eigenvalues $\lambda_1, \lambda_2, \cdots, \lambda_n$ such that $\lambda_a = 1$ and $|\lambda_i| < 1,~i\neq a$.
 The dynamics $\overline{w}^{(t+1)} = M_a \overline{w}^{(t)}$ converges to $\lim_{t\rightarrow\infty} \overline{w}^{(t)} = c_a \pi_a$, where
 \vspace{-0.2\squeezefactor}
 \[
  c_a = \frac{\overline{w}^{(0)} \cdot \pi_a}{\|\pi_a\|^2}
  \vspace{-0.2\squeezefactor}
 \]
where, `$\cdot$' is the vector dot product, and `$\|\cdot\|$' is the vector $2$-norm.
Furthermore, the projection of $\overline{w}^{(t)}$ in the eigenspace spanned by $\pi_a$ remains conserved. That is, $\overline{w}^{(0)} \cdot \pi_a = \overline{w}^{(1)} \cdot \pi_a = \cdots = \overline{w}^{(t)} \cdot \pi_a = \cdots$. [Note: Here we do not assume that the eigenvectors are $L^2$ normalized.]
\end{proposition}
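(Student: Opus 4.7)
The plan is to work in the eigenbasis of $M_a$. Since $\{\pi_1,\ldots,\pi_n\}$ forms a basis for $\mathbb{R}^n$, I can uniquely write $\overline{w}^{(0)} = \sum_{i=1}^n \alpha_i \pi_i$. Iterating the linear recursion then yields
\[
\overline{w}^{(t)} \;=\; M_a^t \overline{w}^{(0)} \;=\; \sum_{i=1}^n \alpha_i \lambda_i^t \pi_i,
\]
and since $\lambda_a = 1$ while $|\lambda_i| < 1$ for $i \neq a$, every summand with $i \neq a$ decays to zero as $t\to\infty$. Thus $\lim_{t\to\infty}\overline{w}^{(t)} = \alpha_a \pi_a$, so the existence of the limit and its proportionality to $\pi_a$ are immediate; the task reduces to identifying $c_a = \alpha_a$ with $\frac{\overline{w}^{(0)}\cdot \pi_a}{\|\pi_a\|^2}$.

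To pin down $\alpha_a$, I would take the Euclidean inner product of the eigen-expansion against $\pi_a$, obtaining $\overline{w}^{(0)}\cdot\pi_a = \sum_i \alpha_i\,(\pi_i\cdot\pi_a)$. Invoking orthogonality $\pi_i\cdot\pi_a = 0$ for $i \neq a$, the sum collapses to $\alpha_a\|\pi_a\|^2$, yielding the claimed formula for $c_a$. The conservation statement falls out of the same computation applied to $\overline{w}^{(t)}$: inner-producting with $\pi_a$ gives $\overline{w}^{(t)}\cdot\pi_a = \alpha_a \lambda_a^t \|\pi_a\|^2 = \alpha_a\|\pi_a\|^2$, independent of $t$. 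A one-line inductive alternative is $\overline{w}^{(t+1)}\cdot\pi_a = (M_a\overline{w}^{(t)})\cdot\pi_a = \overline{w}^{(t)}\cdot(M_a^T\pi_a) = \overline{w}^{(t)}\cdot\pi_a$, which holds provided $\pi_a$ is simultaneously a left eigenvector of $M_a$ with eigenvalue $1$.

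The main obstacle is justifying the orthogonality of the $\pi_i$, which is not automatic: for a generic non-symmetric matrix the right eigenvectors fail to be Euclidean-orthogonal, and the closed form for $c_a$ would then be wrong. However, $M_a = f(P-\lambda_a I)$ shares the eigenvectors of $P$, and the reflecting random walks treated in this paper are reversible against the stationary distribution $\pi_1$ — detailed balance can be checked directly for the displayed $5\times 5$ example. Reversibility makes $D^{-1}PD$ symmetric for $D = \mathrm{diag}(\sqrt{\pi_1})$, so the $\pi_i$ are orthogonal with respect to the weighted inner product $u^T D^{-2} v$. I would therefore either read the ``$\cdot$'' and ``$\|\cdot\|$'' in the statement in that weighted sense, or restrict to chains for which the Euclidean orthogonality of the $\pi_i$ can be verified directly (as in the symmetric line-segment example); under either reading the argument above goes through verbatim.
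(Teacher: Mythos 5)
Your core argument is exactly the paper's proof: the paper decomposes $\overline{w}^{(0)} = c_1\pi_1 + \cdots + c_n\pi_n$, observes $\overline{w}^{(t)} = \sum_i c_i\lambda_i^t\pi_i$, and stops there. Where you go beyond it is in refusing to take the closed form $c_a = (\overline{w}^{(0)}\cdot\pi_a)/\|\pi_a\|^2$ for free, and you are right to refuse: both that formula and the conservation of $\overline{w}^{(t)}\cdot\pi_a$ require $\pi_a$ to be Euclidean-orthogonal to the other eigenvectors (equivalently, that $\pi_a$ is simultaneously a left eigenvector of $M_a$ with eigenvalue $1$), and this fails for the paper's own chains. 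The paper explicitly declares ``$\cdot$'' to be the ordinary dot product, yet for its displayed $5\times 5$ example one computes $\pi_1\cdot\pi_3 = 0.03125 \neq 0$; taking $\overline{w}^{(0)}=\pi_3$ in the dynamics of $M_1=P$ then gives $\overline{w}^{(t)}\to 0$ while the stated formula predicts a nonzero $c_1$, and $\overline{w}^{(t)}\cdot\pi_1 = \lambda_3^t(\pi_3\cdot\pi_1)$ is not constant. So you have diagnosed a real error in the proposition as written, not merely a gap in your own write-up.

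Your proposed repair via reversibility is sound: detailed balance does hold for these reflecting walks, $D^{-1}PD$ is symmetric for $D=\mathrm{diag}(\sqrt{\pi_1})$, and the $\pi_i$ are orthogonal under $\langle u,v\rangle = u^T D^{-2}v$, so the proposition becomes true once ``$\cdot$'' and ``$\|\cdot\|$'' are reinterpreted in that weighted sense (or, most generally, once $c_a$ is written with the left eigenvector $\phi_a$ via biorthogonality, $c_a = (\phi_a\cdot\overline{w}^{(0)})/(\phi_a\cdot\pi_a)$). Two caveats: be explicit that this is an amendment of the statement, since the paper's downstream recipe for initial weights (and Algorithm 2, step 2) uses the Euclidean formula; and drop your fallback of ``verifying Euclidean orthogonality directly for the line-segment example'' --- that example's $P$ is reversible but not a symmetric matrix (the boundary columns break symmetry), and as the computation above shows its eigenvectors are not Euclidean-orthogonal, so that route does not actually rescue the statement even there.
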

\begin{proof}
 The proof follows by decomposing the initial aggregated weight vector into components along the different eigen-directions: $\overline{w}^{(0)} = c_1 \pi_1 + c_2 \pi_2 + \cdots + c_a \pi_a + \cdots + c_n \pi_n$, and thus observing that $\overline{w}^{(t)} = c_1 \lambda_1^t \pi_1 + c_2 \lambda_2^t \pi_2 + \cdots + c_a \pi_a + \cdots + c_n \lambda_n^t  \pi_n$.
\end{proof}

Conversely, if the desired final/converged aggregated weight vector is $c_a \pi_a$, then we need to choose the initial vector $\overline{w}^{(0)}$ such that $\overline{w}^{(0)}\cdot \pi_a = c_a \|\pi_a\|^2$.

\mysection{$2$-dimensional Environments and Structure Construction}

For the purpose of this paper, a $2$-dimensional environment is discretized into an uniform grid, and the Markov chain is 
such that for a cell with all available neighbors (Figure~\ref{fig:markov-2d}(i)), the probabilities of vertical or horizontal transitions are $0.14$, the probabilities of diagonal transitions are $0.1$, and the probability of transitioning to the same state is $0.04$.
In case one or more of the neighboring states are unavailable (obstacles or boundary), the freed-up probability gets uniformly distributed among the remaining neighbors (Figure~\ref{fig:markov-2d}(ii)).

\begin{figure}[h]
 \begin{centering}
  \includegraphics[width=0.45\textwidth, trim=0 0 0 0, clip=true]{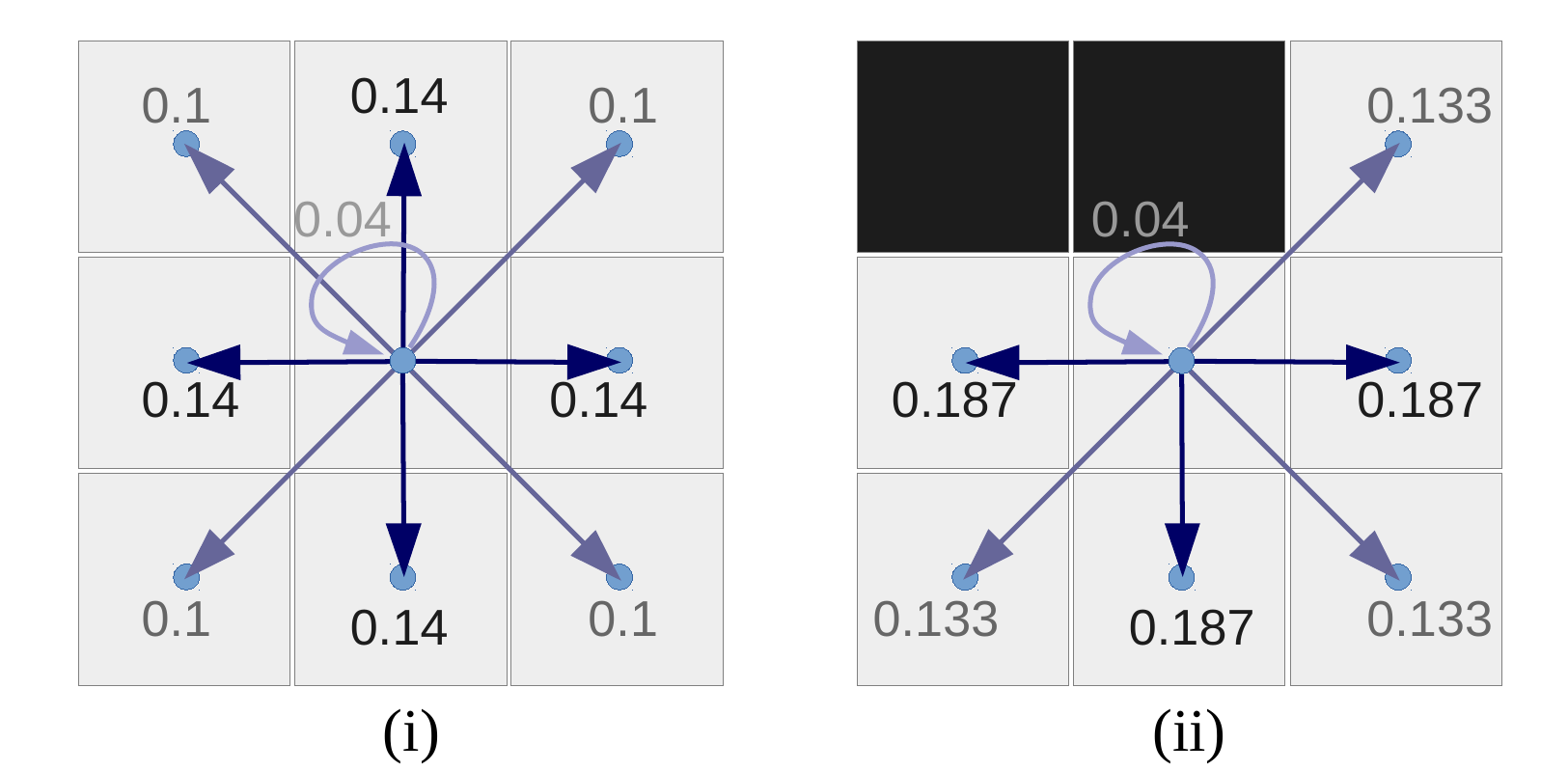} 
 \end{centering}
 \mycaption{Markov chain for a grid representation of a $2$-dimensional space. \emph{Left:} When all neighbors are free. \emph{Right:} An example where some of the neighbors are occupied.} \label{fig:markov-2d}
\end{figure}

As before, if there are $n$ states, the matrix $P$ is $n\times n$ with entries in it being non-zero only if it corresponds to transition between two neighboring states. The matrices of the harmonic attractor dynamics, $M_a$, assuming $f$ is an order-$r$ polynomial, has non-zero elements corresponding to transitions that are at most $r$ hops in the Markov chain.
We, in particular, choose $4^{th}$ order polynomials computed through the optimization \eqref{eq:optimization}.

\begin{figure}
\begin{centering}
 \subfloat[$29$ out of $120$ harmonics constitute a target shape (blue).]{\includegraphics[width=0.15\textwidth, trim=160 230 140 210, clip=true]{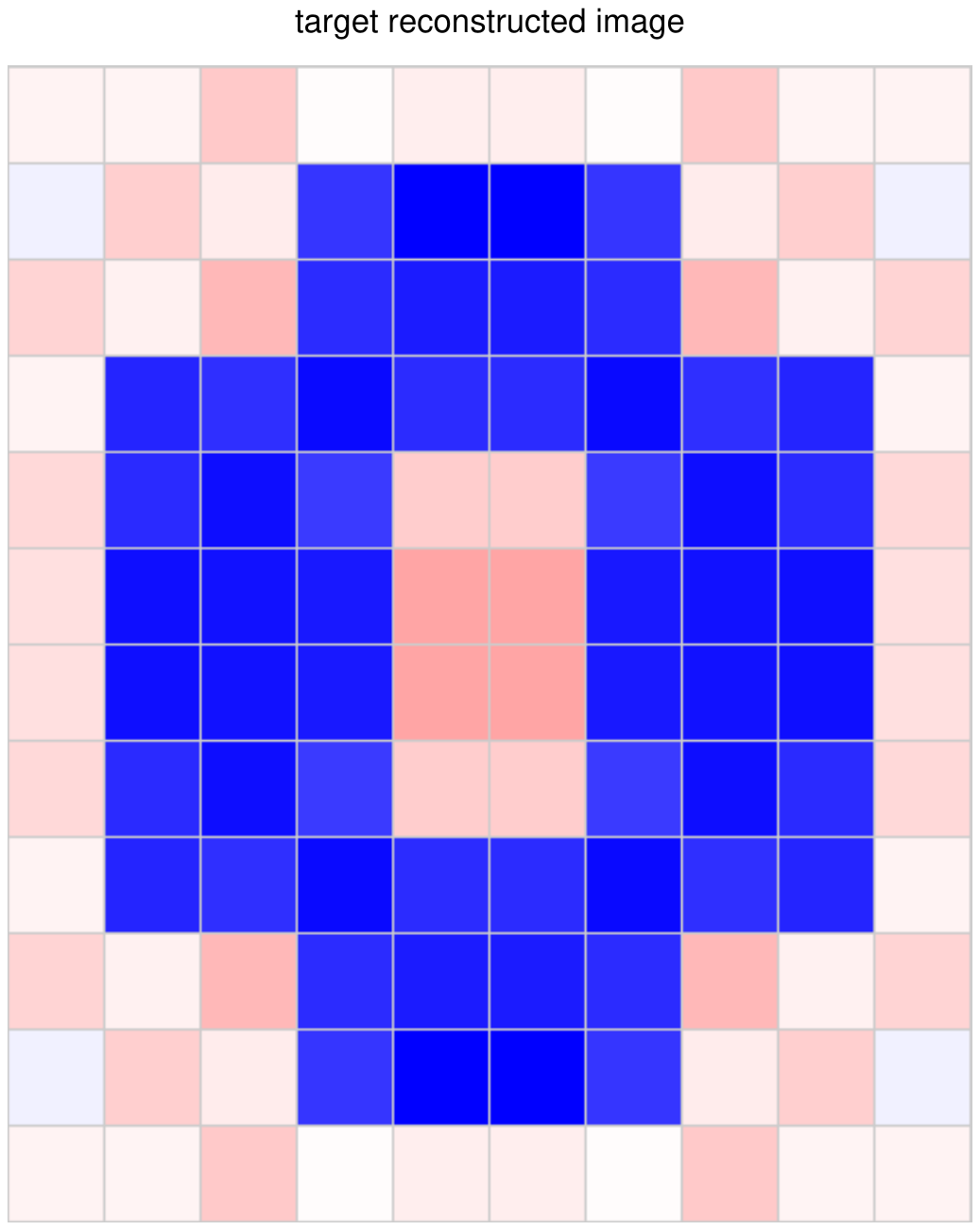} } \hspace{-0.01in}
  \subfloat[The total aggregated weight of the robots in the swarms across all the harmonics.]{\includegraphics[width=0.15\textwidth, trim=160 230 140 210, clip=true]{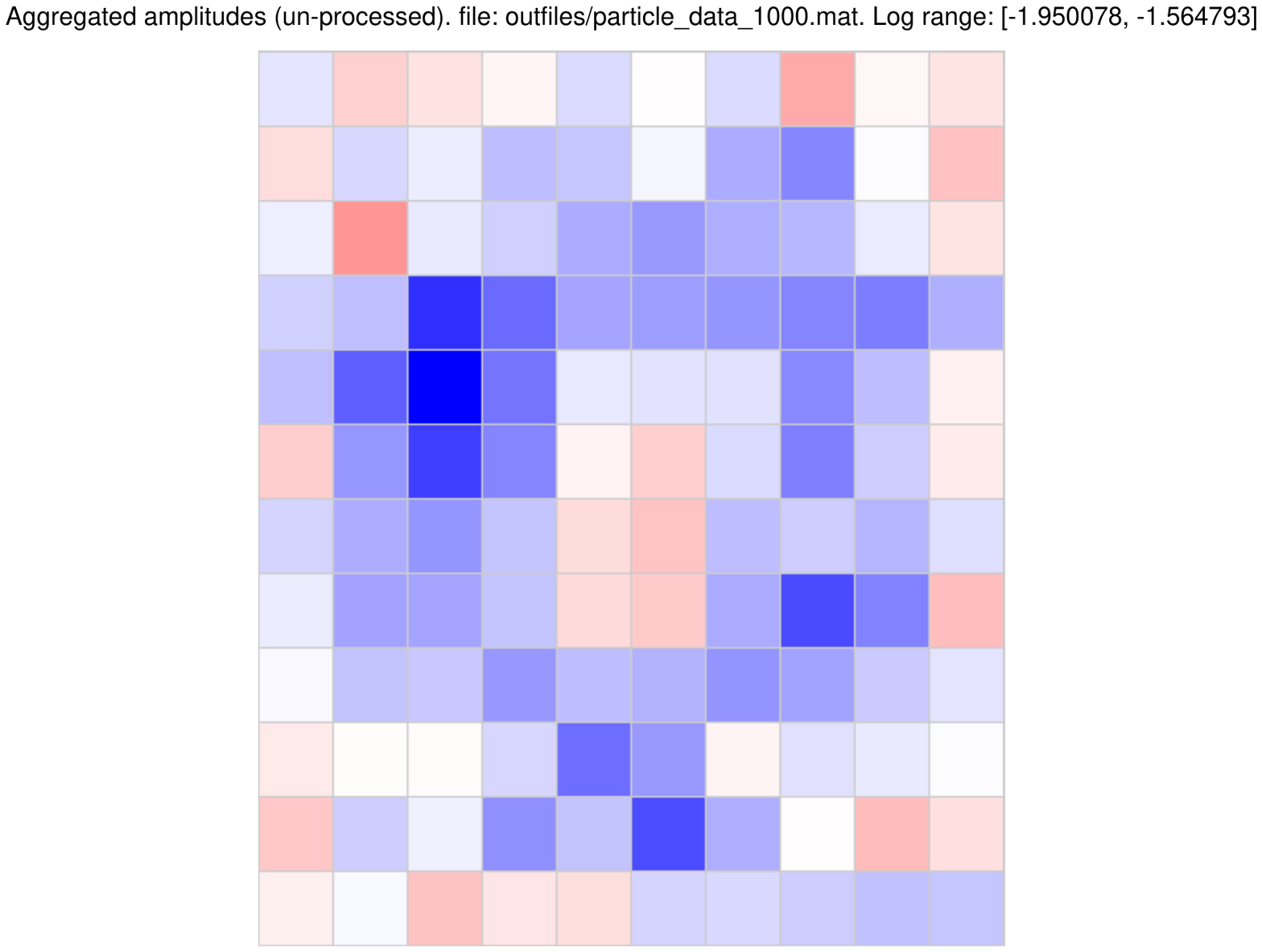} }  \hspace{-0.01in}
   \subfloat[The robots threshold the local weights to reconstruct desired structure.]{\includegraphics[width=0.15\textwidth, trim=160 230 140 210, clip=true]{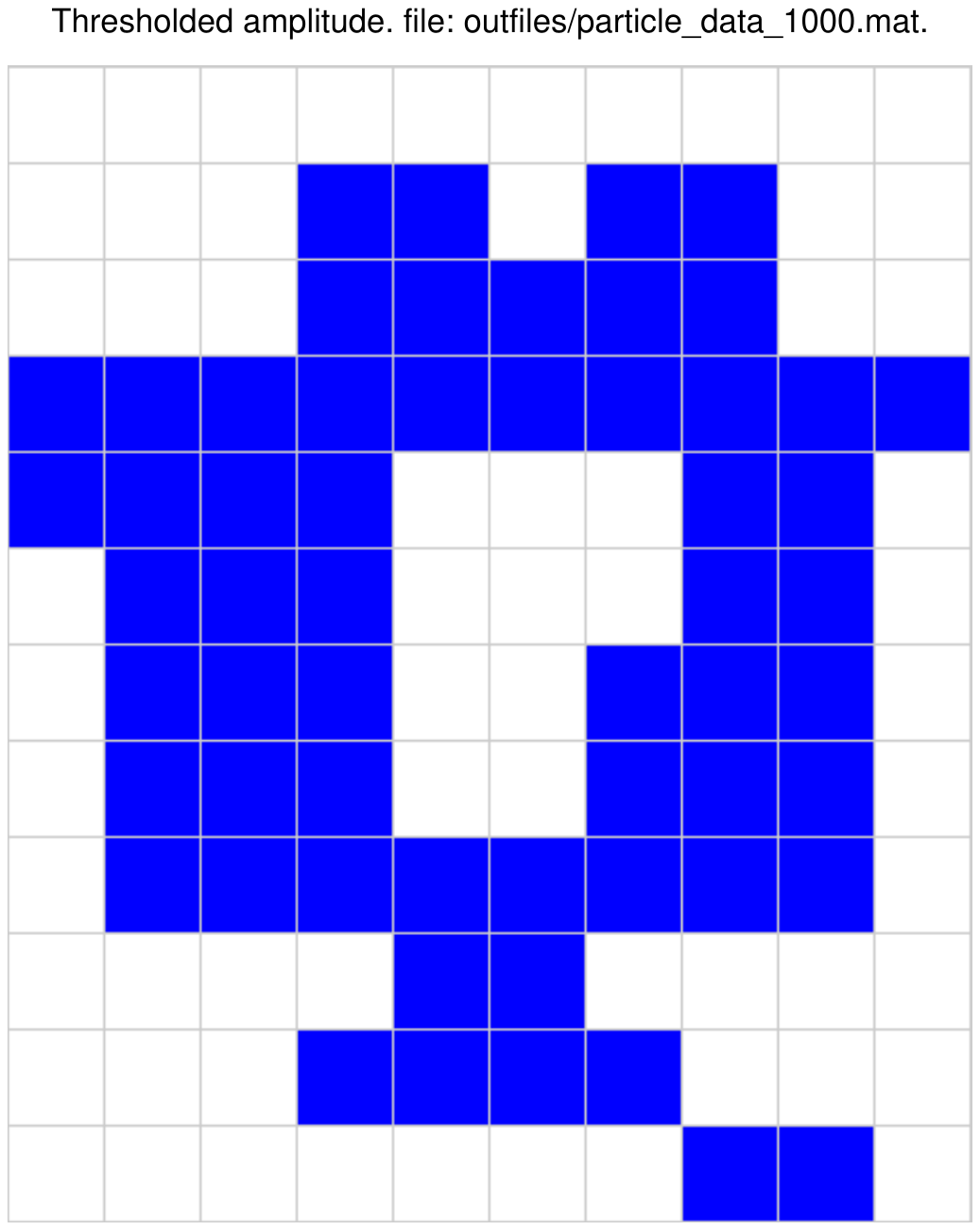} }
   \mycaption{Approximate construction of an annular shape in an obstacle-free environment using $29$ groups/swarms of robots, each constructing one of the constituent harmonics.} \label{fig:annular}
\end{centering}
\end{figure}

The overall algorithm in deploying the swarms consists of the following steps:

\begin{minipage}{0.95\columnwidth}
\begin{algorithm}{Deployment of Large Statistical Swarms For Structure Construction} \label{alg:deployment}
 \newalgline & \alglinecontent{Compute the harmonics, $\pi_1,\cdots, \pi_n$ of a given space and decompose the desired shape, $\overline{w}^{des}$, into constituent harmonics. Choose the best $p$ percentile of the harmonics (ordered by the coefficients of the $L^2$-normalized harmonics in the decomposition) to approximate $\overline{w}^{des} \simeq c_{\alpha_1} \pi_{\alpha_1} + c_{\alpha_2} \pi_{\alpha_2} + \cdots + c_{\alpha_{\ceil{np}}} \pi_{\alpha_{\ceil{np}}}  $ (top row of Figure~\ref{fig:hrmonics-monte-carlo}, Figure~\ref{fig:arrow-target-reconstruction}).} \\
 \newalgline & \alglinecontent{For each of the constituent harmonics, $\pi_{\alpha_h}$, assign a swarm, $\mathcal{S}_{\alpha_h}$, constituting of $N$ robots. The robots in $\mathcal{S}_{\alpha_h}$ are equipped with / informed of the Kernels of $M_{\alpha_h}$. Assuming all robots start at state $s$, assign initial weights of $\frac{c_{\alpha_h} \|\pi_{\alpha_h}\|^2}{N \pi_{\alpha_h, s}}$ to each robot in $\mathcal{S}_{\alpha_h}$.} \\
 \newalgline & \alglinecontent{Individual robots in $\mathcal{S}_{\alpha_h}$ follow Algorithm~\ref{alg:individual-robot}, while communicating only with the robots in $\mathcal{S}_{\alpha_h}$ that are present in its current state (columns of Figure~\ref{fig:hrmonics-monte-carlo}). Continue until convergence is attained (average aggregated weights of robots in $S_h$ stabilizes).} \\
  \newalgline & \alglinecontent{Upon attaining convergence, perform environment-wide rescaling of weights to match coefficients $c_{\alpha_h}$.}\\
  \newalgline & \alglinecontent{In each state, $i$, robots of all the swarms communicate to compute the total aggregated weights $\overline{w}_i$ (Figure~\ref{fig:arrow-rescaled-reconstruction}). Robots hold position in states where the aggregated weight is above a threshold, $\tau$, otherwise they leave the environment (Figure~\ref{fig:arrow-rescaled-reconstruction-thresholded}).}
\end{algorithm}
\end{minipage}
\vspace{0.1in}

The step $4$ (rescaling) is necessary since the aggregated weight maintained by the swarm, $\mathcal{S}_a$, encounter dissipations due to the finite number of robots in the swarm.
Thus, upon attaining convergence, $\overline{w}^{(t)}\cdot \pi_a$, do not exactly remain the same as was set at $t=0$, despite the Proposition~\ref{prop:conservation} (which holds true for the limiting case of infinite robots).
A one-time global aggregated $L^2$ norm is computed after attainment of convergence for each of the swarms, $\mathcal{S}_a$, through multi-hop neighbor communication.

\emph{Implementation:}
Our current implementation (Figures~\ref{fig:1d-harmonic5-dynamics}, \ref{fig:2d-arrow}, \ref{fig:annular}) were made using Octave, and are centralized implementations. However, being statistical swarm, each robot acts independently, with only local communication with robots in its own state, and sensing within radius $r$, thus making our algorithm massively distributable and decentralizable.

\mysection{Conclusion}

In this paper we present a novel method for constructing harmonics of an environment in the spatial distribution of weights carried by a large swarm of robots using \emph{harmonic attractor dynamics}. This allowed us to decompose desired structures into constituent harmonics, and have multiple swarms of robots construct them.
%
%
Our algorithm is highly local at the level of individual roots, with robots having non-zero updates to their weights only when they take action within a radius of $r$. Sensing of obstacles/boundaries is also required only within a radius of $r$.
Our initial simulation results, even with finite swarm size and noise in robot actions, show promising ability to reconstruct desired structures in an environment. 


\begin{figure*}
    \begin{tabular}{ccc}
     \imagetop{\subfloat[A $12\times 10$ grid environment with obstacles (black), $95$ free cells, and a desired shape to be constructed by the swarms (blue).]{\hspace{0in} \includegraphics[width=0.21\textwidth, trim=160 210 140 210, clip=true]{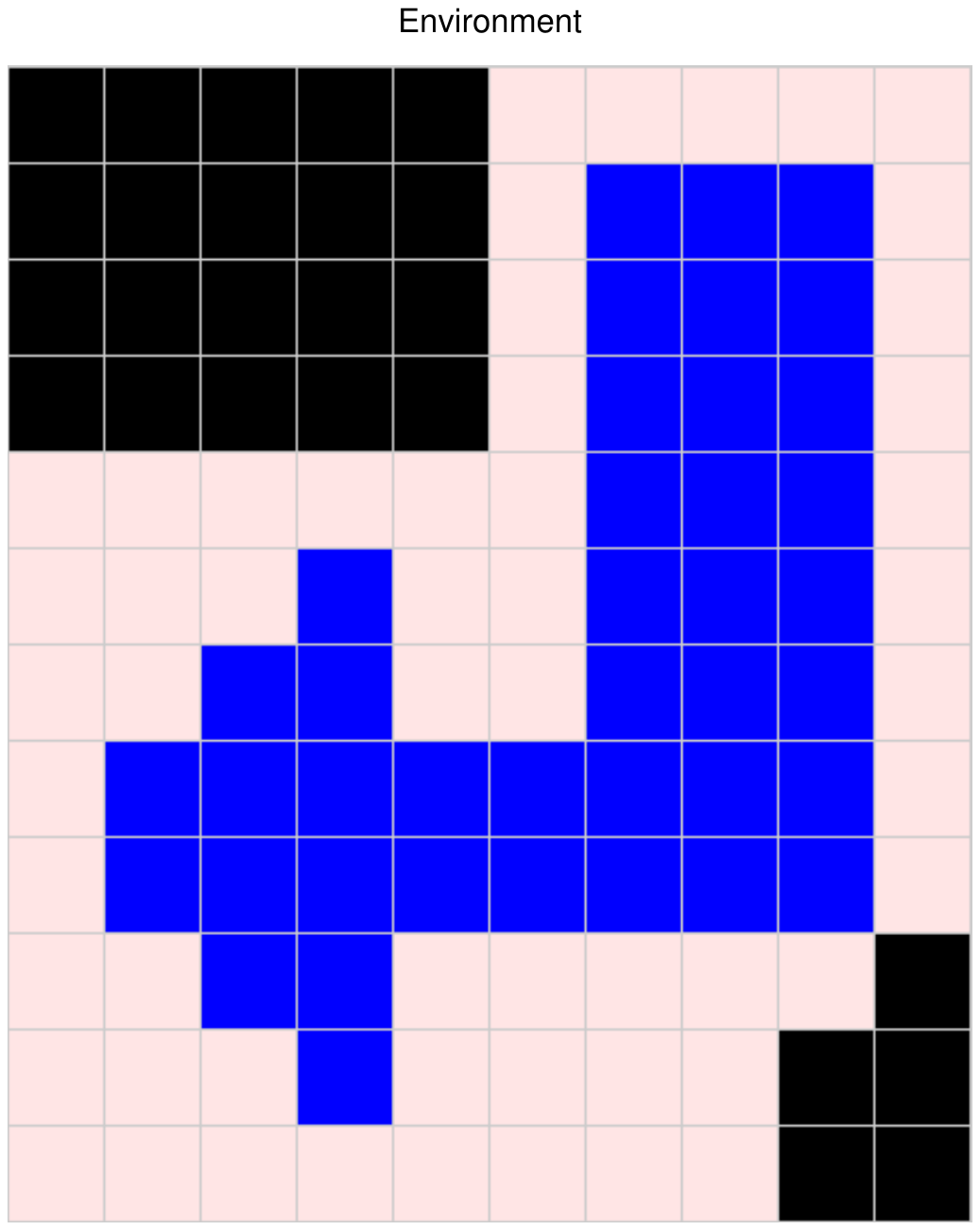} \hspace{0in} \label{fig:arrow-desired}}} \hspace{-0.2in}
     &
     \imagetop{\subfloat[Harmonics $\pi_1,\pi_2,\cdots,\pi_6$ of the environment (first $6$ out a total of $95$ in order of the magnitude of corresponding eigenvalues of $P$).]{
     \begin{tabular}{c}
     \includegraphics[width=0.1\textwidth, trim=160 210 140 210, clip=true]{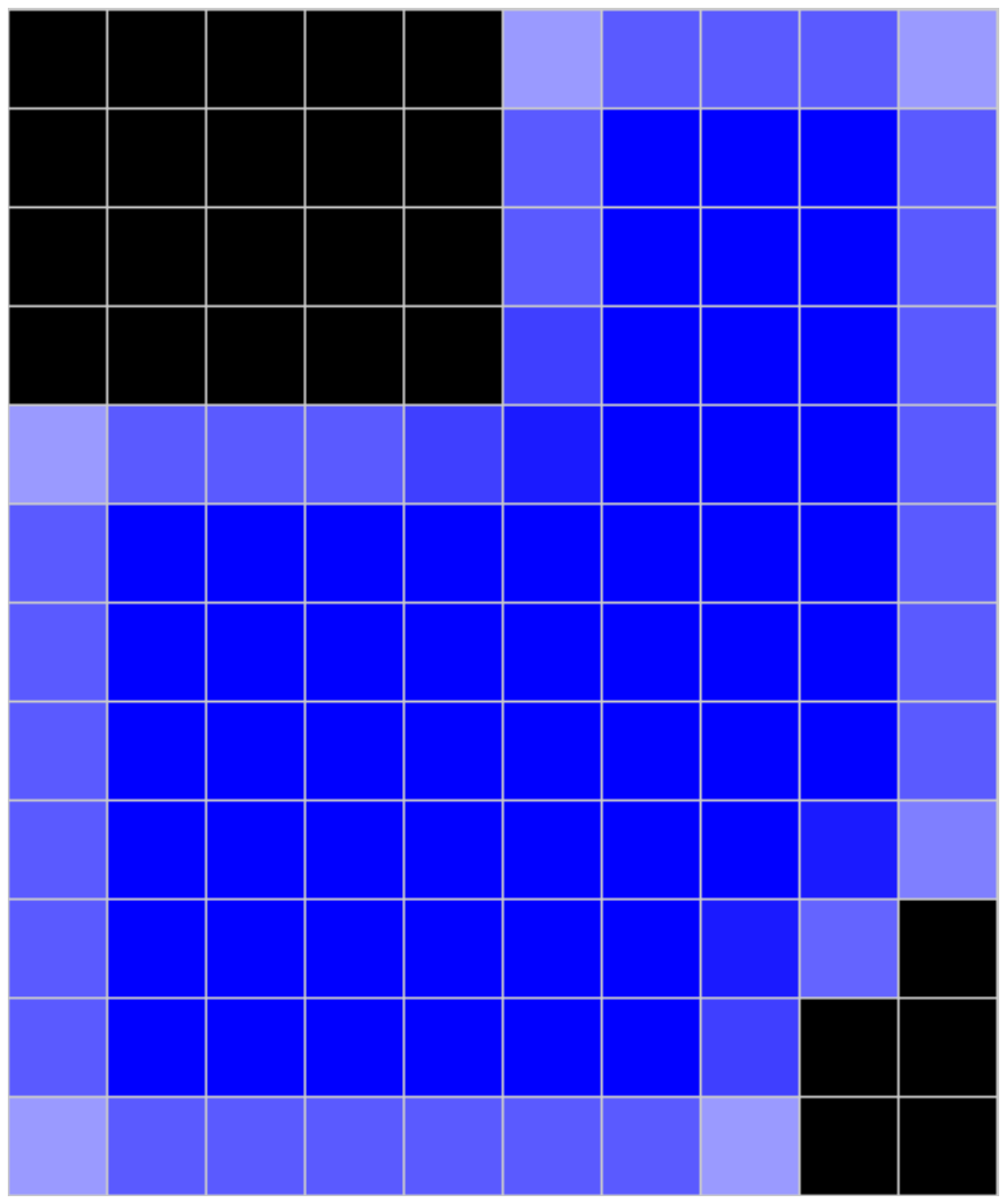}
     \includegraphics[width=0.1\textwidth, trim=160 210 140 210, clip=true]{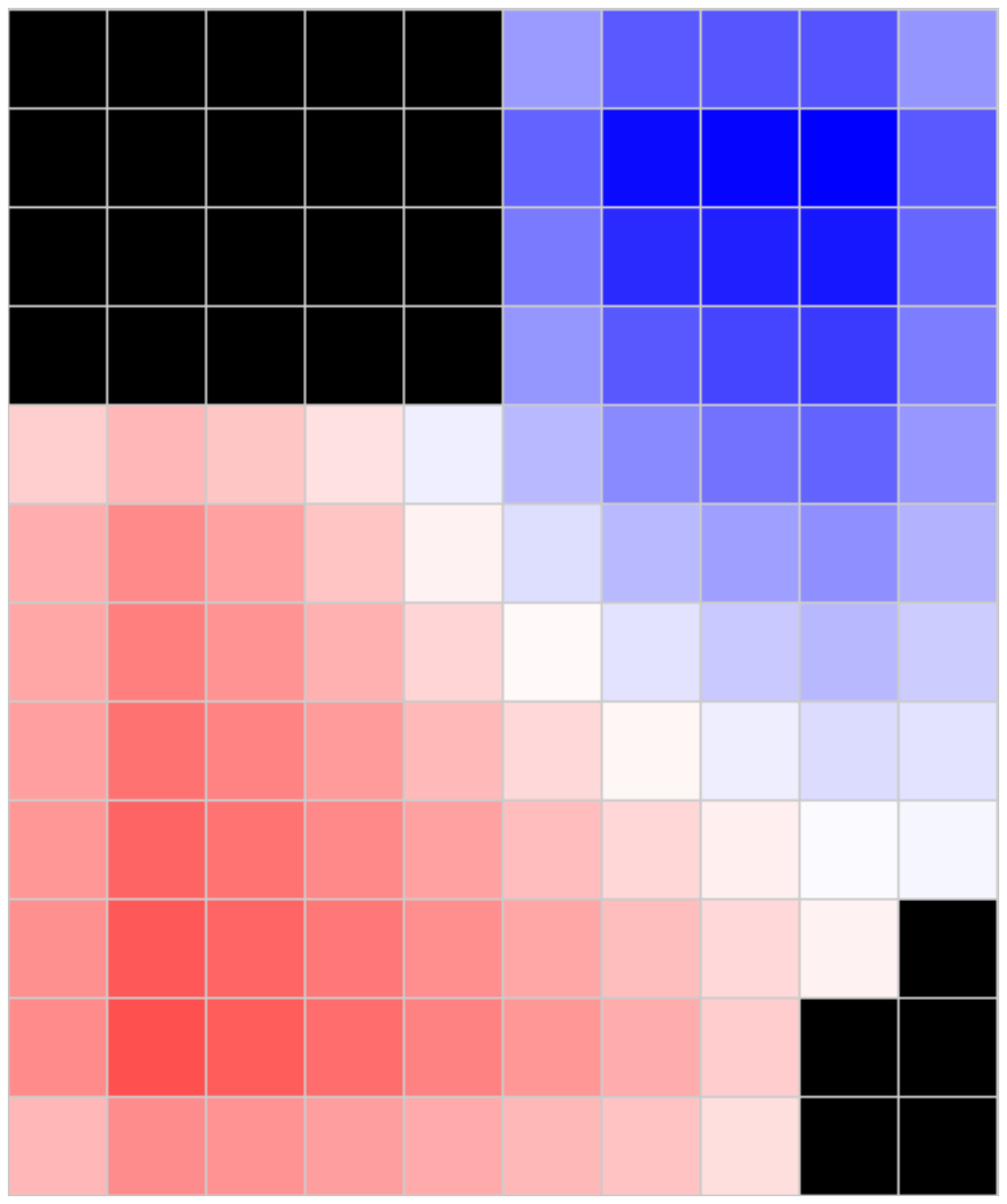}
     \includegraphics[width=0.102\textwidth, trim=160 210 140 210, clip=true]{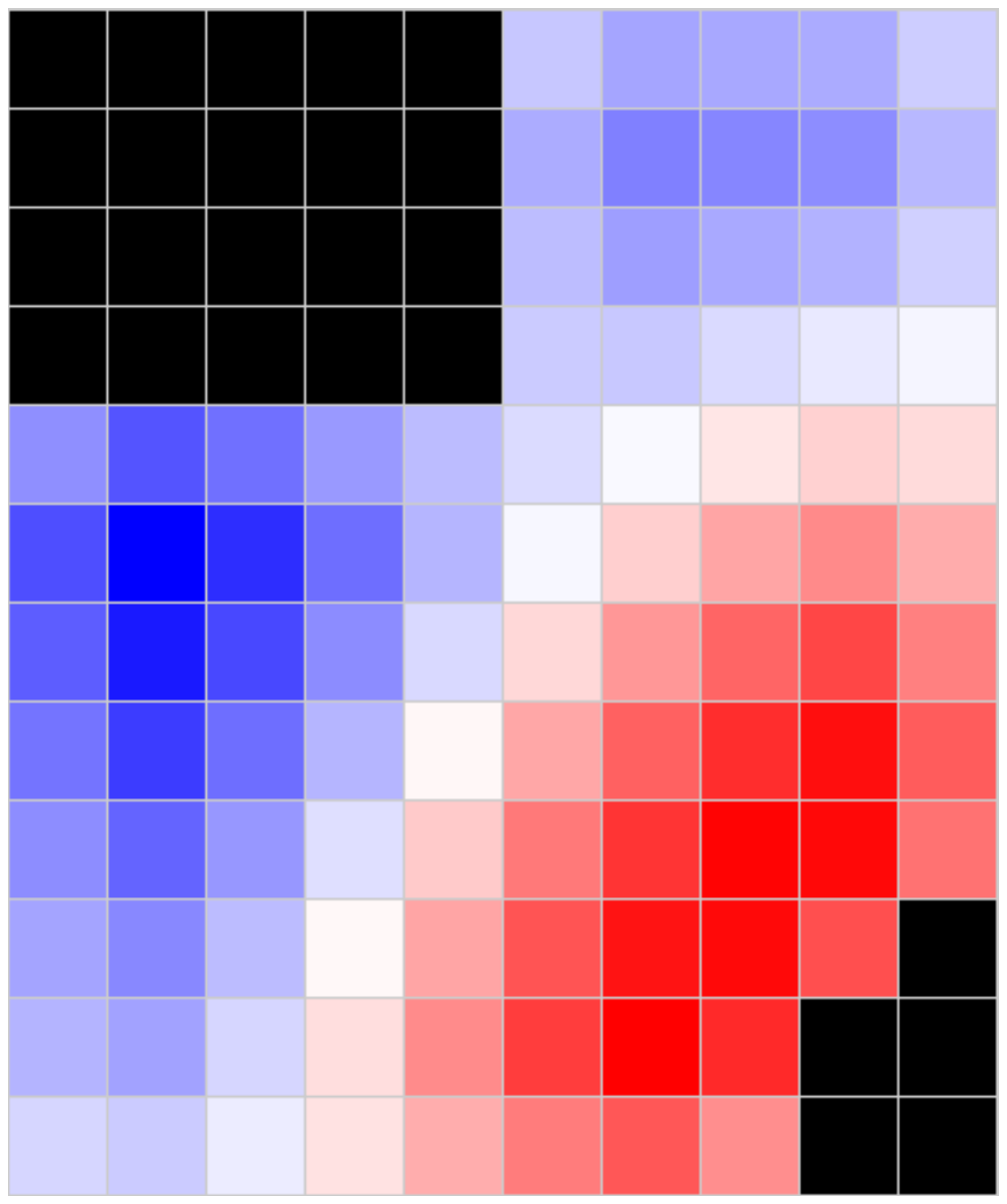} \\
     \includegraphics[width=0.1\textwidth, trim=160 210 140 210, clip=true]{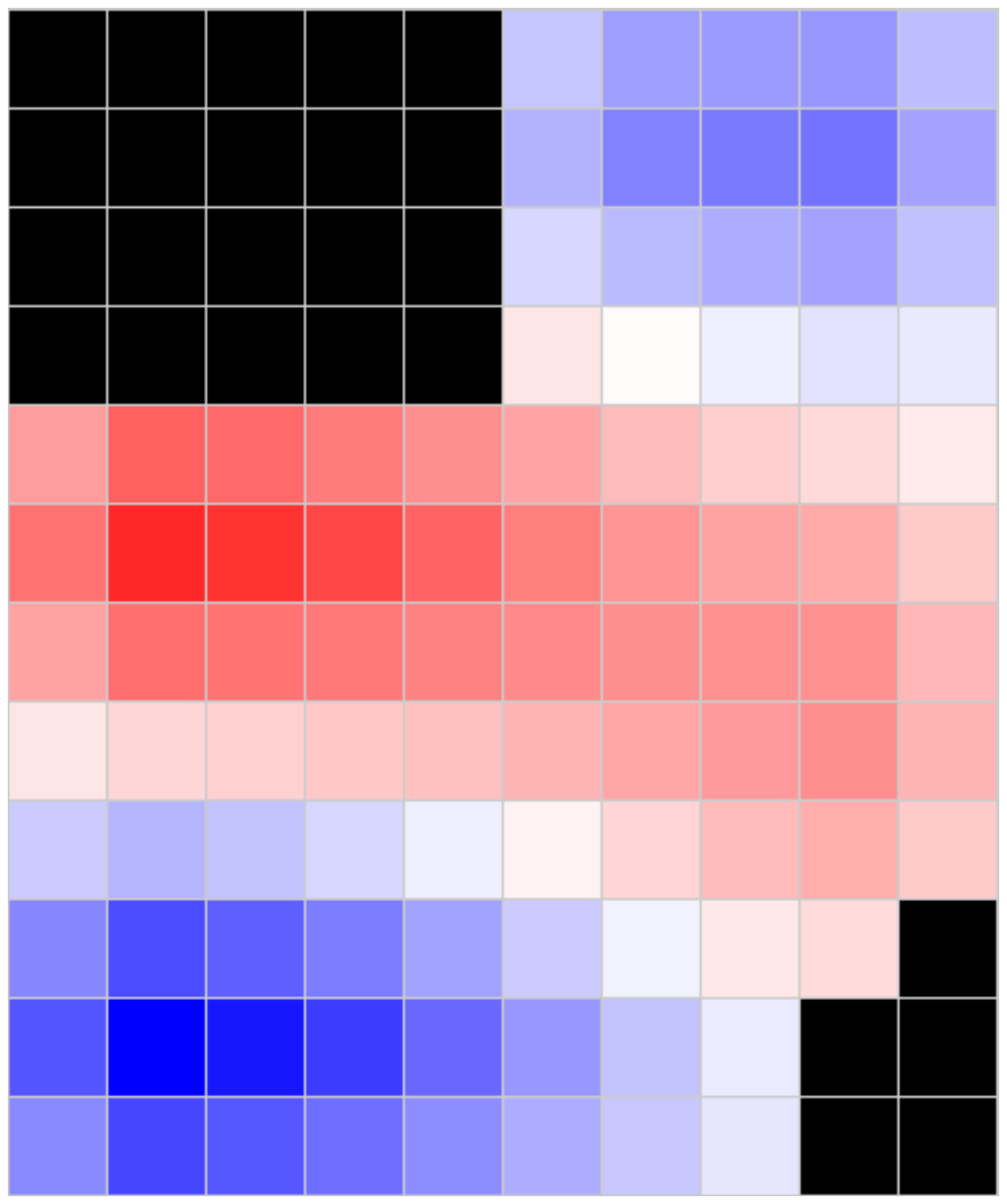}
     \includegraphics[width=0.1\textwidth, trim=160 210 140 210, clip=true]{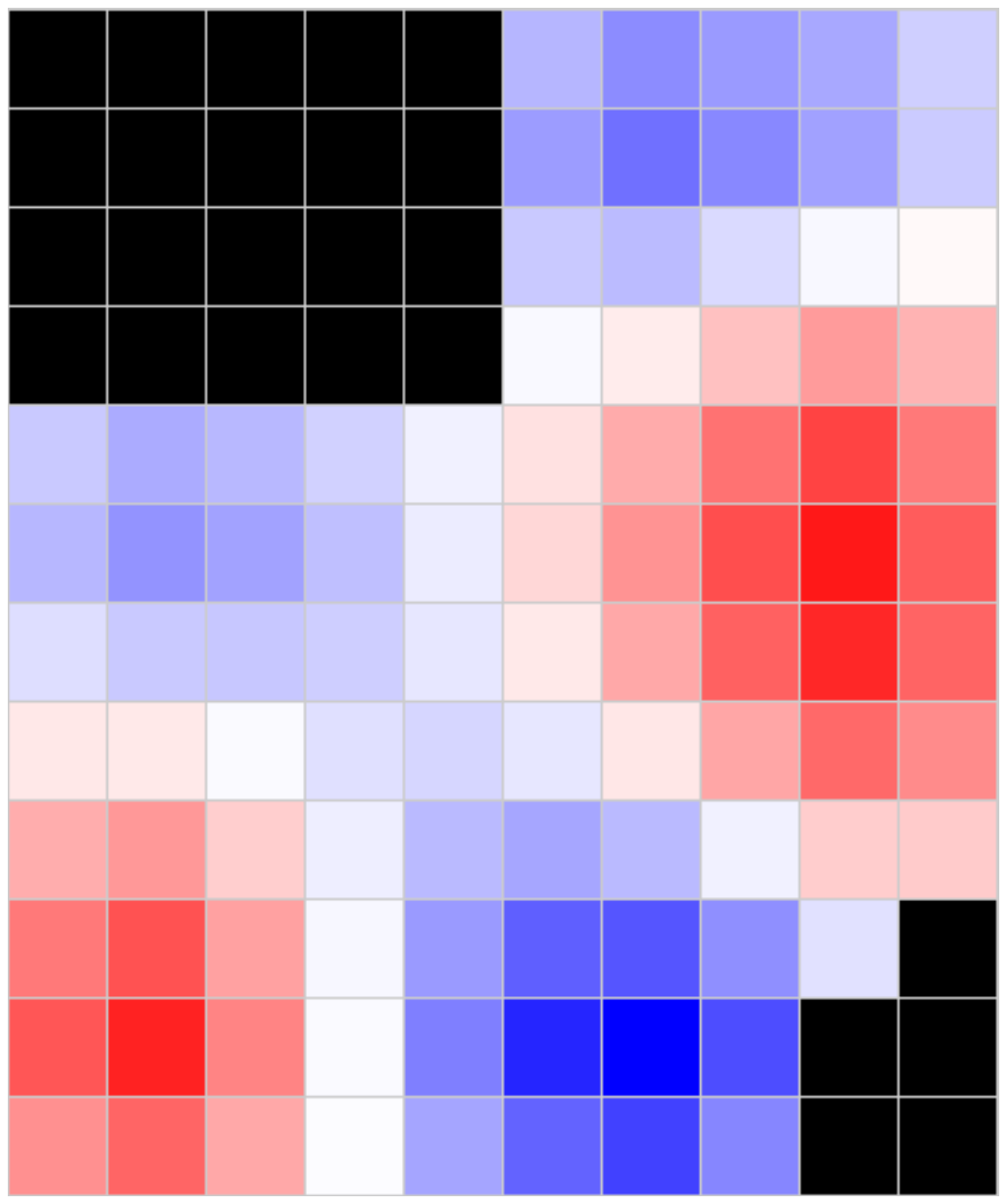}
     \includegraphics[width=0.1\textwidth, trim=160 210 140 210, clip=true]{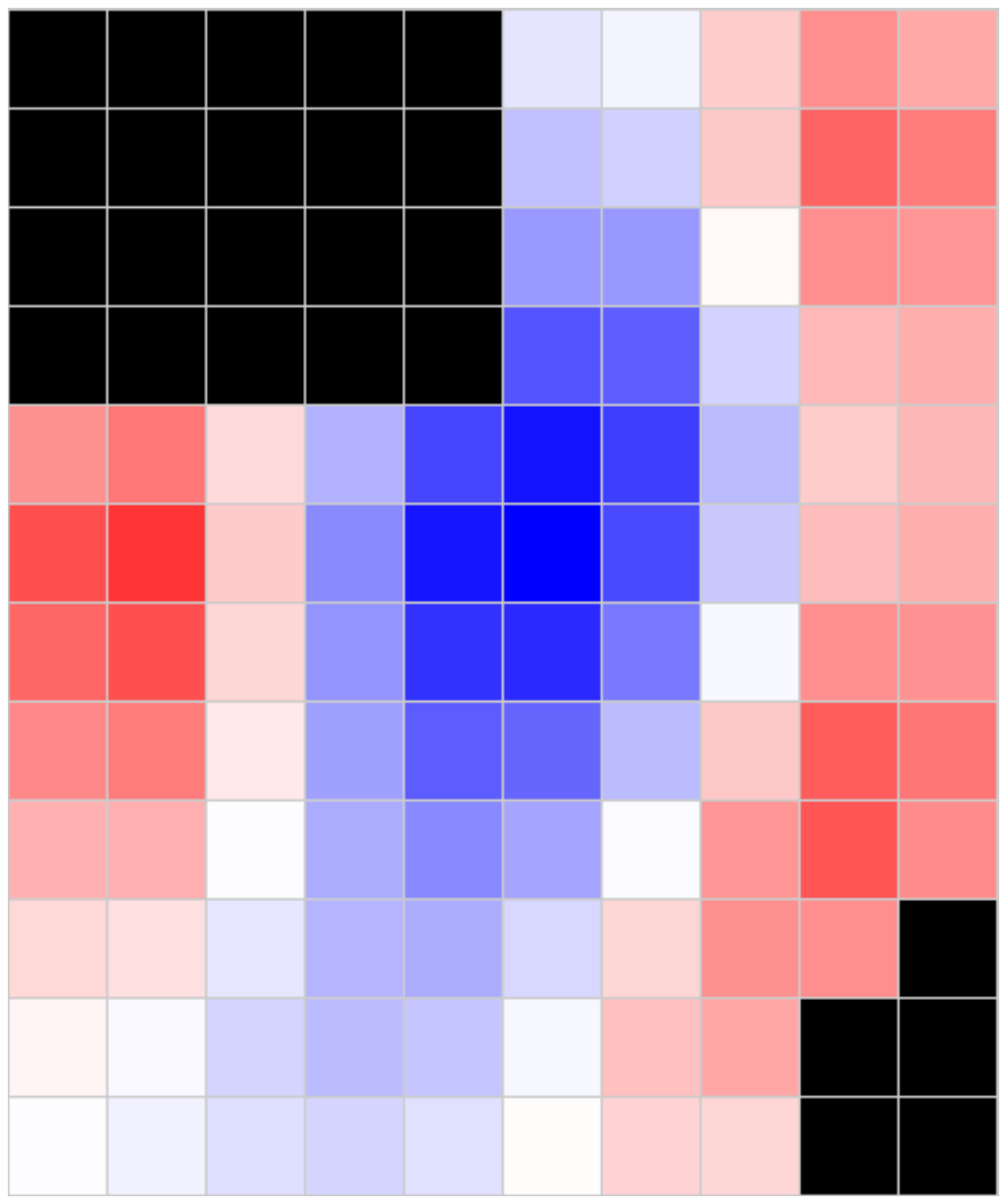}
      \end{tabular} } } \hspace{-0.2in}
      &
      \imagetop{\subfloat[The target shape constructed using a superposition of the first $24$ harmonics (out of $95$, ordered by the magnitude of coefficients of $L^2$ normalized harmonics) constituting the desired shape in Figure~\ref{fig:arrow-desired}. This is the shape that the swarms attempt to reconstruct.]{\hspace{0.4in}\includegraphics[width=0.21\textwidth, trim=160 210 140 210, clip=true]{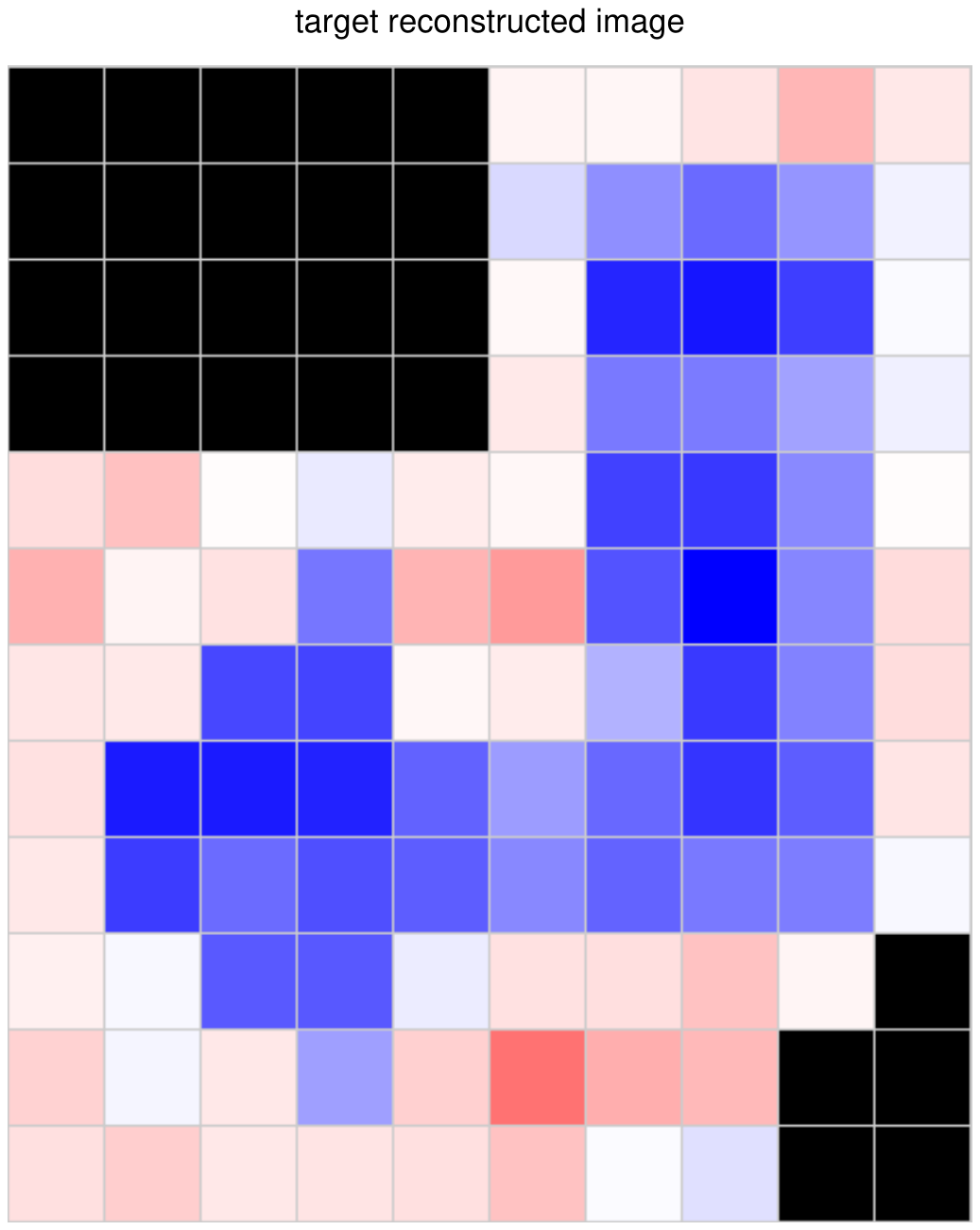} \hspace{0.4in} \label{fig:arrow-target-reconstruction}}}
      \\
       \multicolumn{2}{c}{ \imagetop{\subfloat[The first $5$ of the constituent harmonics of Figure~\ref{fig:arrow-target-reconstruction}, and its weighted Monte-Carlo based reconstruction using $190000$ robots per swarm representing a harmonic. The first row shows the actual harmonics. The row below that shows the index, $a$, of the harmonics and the corresponding coefficients, $c_i$, in the desired shape. The rows below show the aggregated weights in robot swarm $\mathcal{S}_a$ from the dynamics of $M_a$.]{\includegraphics[width=0.65\textwidth, trim=150 900 150 60, clip=true]{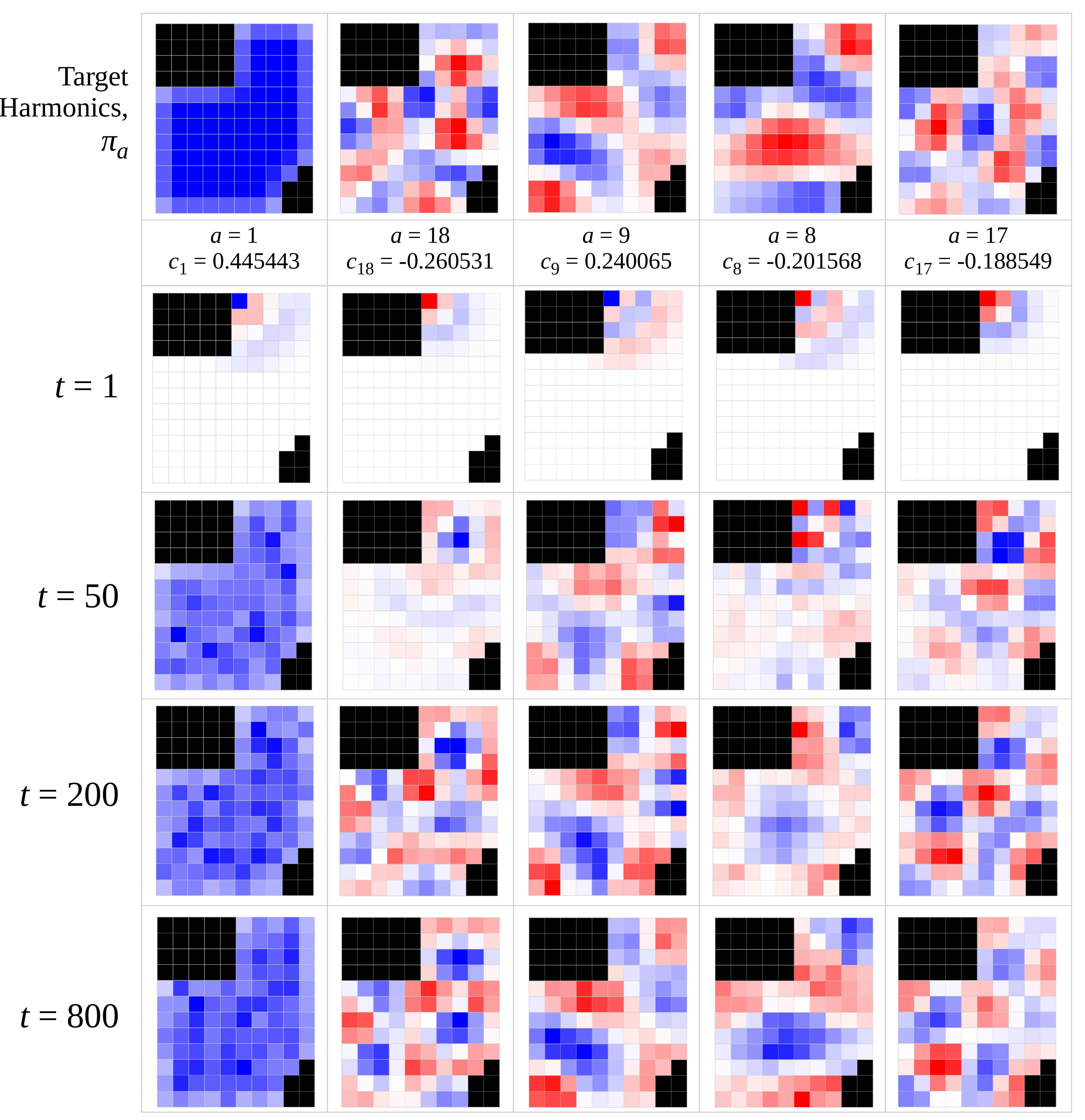} \label{fig:hrmonics-monte-carlo}} }}
       & 
        \imagetop{ \begin{tabular}{c}
         \imagetop{\subfloat[The the total aggregated weight at $t=800$ from the robot swarms constructing $24$ constituent harmonics of Figure~\ref{fig:arrow-target-reconstruction}.]{\hspace{0.2in}\includegraphics[width=0.2\textwidth, trim=150 40 130 32, clip=true]{figures/PostAnalized_reconstruction_arrow_rescaled_800.pdf} \hspace{0.2in} \label{fig:arrow-rescaled-reconstruction}}}
         \\
         \imagetop{\subfloat[After attaining convergence, robots communicate locally to threshold their weights and recreate an approximation representation of the desired Figure~\ref{arrow-desired}.]{\hspace{0.2in}\includegraphics[width=0.2\textwidth, trim=150 40 130 32, clip=true]{figures/PostAnalized_reconstruction_arrow_rescaled_thresholded_800.pdf} \hspace{0.2in} \label{fig:arrow-rescaled-reconstruction-thresholded}}}
        \end{tabular} }
     \end{tabular} 
    \mycaption{A $2$-dimensional environment and desired shape to be constructed in it. A total of $24$ large groups/swarms of robots construct the best $24$ constituent harmonics using the respective harmonic attractor dynamics.} \label{fig:2d-arrow}
\end{figure*}

\clearpage
\bibliographystyle{plain}
\bibliography{markov1.bib}

\end{document}